\documentclass{article}
\usepackage{subcaption}
\usepackage{wrapfig} % in preamble
\usepackage[nonatbib,preprint]{neurips_2026}
\usepackage{tcolorbox} % Put this in your preamble
\usepackage{amsthm}
\usepackage{xcolor} % in preamble
\definecolor{LightGray}{gray}{0.95}
\definecolor{HeaderBlue}{rgb}{0.3,0.3,0.6}

\usepackage{pifont,xcolor,booktabs,wrapfig}
\usepackage{wrapfig}

  \usepackage{amsmath}

\usepackage{amsmath}
\tcbset{
  colback=gray!5!white,
  colframe=black!30,
  fonttitle=\bfseries,
  title=LLM Prompt,
  coltitle=black,
  boxrule=0.5pt,
  arc=3pt,
  left=4pt,
  right=4pt,
  top=4pt,
  bottom=4pt
}
\newtheorem{theorem}{Theorem}

\newtheorem{remark}{Remark}
\newtheorem{assumption}{Assumption}
\newtheorem{lemma}{Lemma}

\usepackage{float}
\usepackage{tikz}
\usepackage{calc}
\usepackage{multirow}
\usepackage{pifont}

\usepackage{caption}
\DeclareCaptionFormat{myformat}{#1#2#3\hrulefill}
\usepackage{algorithm}
\usepackage{algorithmic} 
\usepackage{makecell} % Add this to your preamble if not already included
 
\tcbset{
  mycontribbox/.style={
  title=Contributions,
    colback=blue!5!white,
    colframe=blue!30!white,
    boxrule=0.5pt,
    arc=2mm,
    left=2mm,
    right=2mm,
    top=1mm,
    bottom=1mm,
    boxsep=3pt
  }
}

\usepackage{titlesec}
\newcommand{\AppendixBanner}{%
  \titleformat{\section}
    {\large\bfseries}
    {\thesection}{0.6em}{}
  \noindent\rule{\textwidth}{2pt}\vspace{2mm}
  \begin{center}
    {\LARGE\bfseries SUPPLEMENTARY INFORMATION}
  \end{center}
  \noindent\rule{\textwidth}{1pt}\vspace{2mm}
}

\definecolor{okgreen}{RGB}{0,128,0}
\definecolor{badred}{RGB}{180,0,0}
\definecolor{amber}{RGB}{184,134,11}

\tcbset{
  myexamplebox/.style={
    notitle,
    colback=blue!7!white,
    colframe=blue!30!white,
    boxrule=0.5pt,
    arc=2mm,
    left=2mm,
    right=2mm,
    top=1mm,
    bottom=1mm,
    boxsep=3pt
  }
}

\usepackage[utf8]{inputenc} % allow utf-8 input

\usepackage[backend=biber, style=ieee, doi=false,url=false]{biblatex}
\usepackage{amsmath}

\usepackage[T1]{fontenc}    % use 8-bit T1 fonts
\usepackage{hyperref}       % hyperlinks
\hypersetup{
    colorlinks=true,
    linkcolor=orange,
    filecolor=magenta,      
    urlcolor=cyan,
    citecolor=blue,
    pdftitle={eri},
    pdfpagemode=FullScreen,
    }
\usepackage{url}            % simple URL typesetting
\usepackage{booktabs}       % professional-quality tables
\usepackage{amsfonts} 
\usepackage{xcolor}
\usepackage{nicefrac}       % compact symbols for 1/2, etc.
\usepackage{microtype}      % microtypography
\usepackage{xcolor}         % colors
\usepackage{booktabs}

\title{Fast Generalization after Interpolation via Critically Damped Momentum Optimization}

\author{%
  Luca Muscarnera  \\
  University of Cambridge 
  \And
  Silas Ruhrberg Estévez \\
  University of Cambridge 
  \AND
  Yuanzhang Xiao \\
  University of Hawaii at Manoa 
  \And 
  Mihaela Van der Schaar
  \\
  University of Cambridge 
}

\begin{document}

\maketitle

\begin{abstract}
A central problem in machine learning is that models can achieve near-perfect training performance while generalizing substantially less well to unseen examples. This gap is especially acute in high-dimensional, low-sample regimes, where many interpolating solutions exist and optimization must implicitly select among minima with different generalization properties. Following recent theoretical advances on optimization dynamics near the interpolation threshold, we note that the two-regime structure of risk minimization, with loss minimization followed by complexity minimization, motivates a biphasic optimization schedule. We thus theoretically demonstrate that \texttt{GROKtimizer}, a biphasic strategy that combines rapid convergence to interpolation with Critically Damped Momentum (CDM)-based post-interpolation norm minimization, offers a natural solution for selecting low-norm interpolating solutions. Under a local quadratic model of the post-interpolation basin, \texttt{GROKtimizer} provides a quadratic speedup over classical gradient descent, with provable optimality among first-order optimizers. To showcase the applicability of our method, we evaluate \texttt{GROKtimizer} on several synthetic benchmarks common in the classical grokking literature and on various real-world datasets. Finally, we reconcile our findings with the flat-minima hypothesis, highlighting the importance of post-interpolation dynamics in the construction of high-quality, generalizing models.
\end{abstract}

\section{Introduction}

Training machine learning models that generalize reliably remains difficult, especially in high-dimensional regimes where data are sparse, noisy, or expensive to obtain \cite{zhang2017understanding}. Such regimes arise in genomics and single-cell biology, where each sample may contain thousands of molecular measurements but few labels \cite{FeldnerBusztin2023}; in medicine, where clinically meaningful outcomes are often rare \cite{Rajkomar2018}; in finance, where signals are weak and non-stationary \cite{CHINCO2018}; and in materials science or drug discovery, where experiments are costly \cite{Xu2023, Dou2023}. Across these settings, a model may fit the available data without learning a predictor that extrapolates robustly to unseen examples.

This difficulty is closely tied to the geometry of modern loss landscapes \cite{Garipov2018}. When the number of effective degrees of freedom is large relative to the amount of data, there may be many parameter configurations that explain the training set equally well \cite{draxler2018essentially}. Interpolation therefore does not identify a unique model \cite{Belkin2019}. Instead, optimization must implicitly select among many solutions that achieve similar or identical training loss but differ substantially in their complexity and test performance \cite{Soudry2018}. The challenge is therefore not only to reach low training error, but to reach an interpolating solution that generalises.

Regularization methods were developed precisely to guide this selection process. Weight decay, norm penalties, early stopping, and related methods bias optimization away from arbitrary high-complexity interpolators and toward simpler solutions \cite{Krogh1991}. This reflects a classical Occam-style principle: among models that explain the observed data, prefer the one that is less likely to fit the data by coincidence  \cite{Rasmussen2005} \cite{balasubramanian1997statistical}. In the optimization dynamics studied in this work, this principle appears through the parameter norm: among interpolating solutions, lower-norm solutions provide a tractable notion of lower complexity \cite{hastie2022surprises}.

Grokking provides a striking example of this separation between fitting and generalization \cite{power2022grokking, kumar2023grokking}. In grokking, a model first reaches near-perfect training performance, often appearing to have memorized the data, and only much later undergoes a delayed transition to strong test performance. This phenomenon suggests that training can proceed in two qualitatively different phases: an initial phase of empirical risk minimization, followed by a slower post-interpolation phase in which optimization continues to move through the space of interpolating solutions toward lower-complexity predictors. However, the practical value of this phenomenon is limited by its timescale. In classical grokking benchmarks, the number of epochs required for generalization can be orders of magnitude larger than the number required to fit the training set \cite{power2022grokking,liu2022towards}. As a result, grokking is often studied on synthetic or highly simplified tasks where extremely long training runs are feasible \cite{jeffares2025not}.

In this work, we address this gap by studying the post-interpolation phase as an optimization problem in its own right. We argue that, after interpolation, the relevant objective is no longer primarily to reduce training loss, but to select a low-complexity solution within or near the set of interpolating parameters. Under a local quadratic approximation of the regularized loss, this phase can be described as damped dynamics along the flat directions of the loss landscape. This perspective relates minimum-norm solution selection to the geometry of flat regions in the loss landscape and turns the choice of post-interpolation optimization dynamics into a damping problem \cite{Hochreiter1997}. 

\paragraph{Contributions.}
\begin{enumerate}
    \item We characterize the post-interpolation regime as a local quadratic dynamical system, linking delayed generalization to damped oscillator behavior along flat directions of the loss landscape.
    \item We show that critically damped momentum accelerates convergence toward low-norm interpolating solutions in this regime, yielding a principled momentum schedule for post-interpolation training.
    \item We propose \texttt{GROKtimizer}, a biphasic optimization strategy that first drives models to interpolation and then switches to critically damped, weight-decayed momentum dynamics.
    \item We validate our approach empirically on classical grokking benchmarks and real-world high-dimensional datasets.
\end{enumerate}

\section{Theoretical Considerations}
\textbf{Parameter norm as a proxy for complexity.}
The relationship between parameter geometry and generalization has been widely studied in modern machine learning \cite{hochreiter1997flat, jacot2018neural}. A particularly important case is overparameterized regression \cite{bartlett2020benign}, where infinitely many solutions can interpolate the training data. In this setting, the minimum-norm interpolating solution plays a central role: although all interpolating solutions achieve identical training error, they can differ substantially in their behavior on unseen data.

A surprisingly simple model of this idea can be explicitly obtained by studying overparameterized isotropic regression, where interpolation leaves a null-space of solutions with identical training error but different test behavior.

\begin{lemma}[Generalization Asymptotics for Overparameterized isotropic regression] 
Let $X = \{\boldsymbol{x}_1,...,\boldsymbol{x}_n\} \subset \mathbb R^p$ be a dataset such that $\boldsymbol{x}_i \sim \mathcal N(\mathbf 0, \mathbf I_p) \ \forall i = 1,...,n$, and define the data matrix $\mathbf X = \sum_{i=1}^n \boldsymbol{e}_i \boldsymbol{x}_i^\top$. Assume $p > n$ and consider a noisy target vector \(
    \boldsymbol{y} = \mathbf X \boldsymbol{w}_* + \sigma \boldsymbol{\varepsilon} \)
where $\boldsymbol{\varepsilon} \sim \mathcal N(\mathbf 0, \mathbf I_n)$, $\sigma \in \mathbb R_+$ and $\boldsymbol{w}_* \in \mathbb R^p$. Consider the minimum-norm interpolating solution $
    \widehat{\boldsymbol{w}}_*$ and define the family of interpolating solutions as \(
    \widehat{\boldsymbol{w}}(\boldsymbol{z}) = 
    \widehat{\boldsymbol{w}}_* + \Pi_{\mathsf{ker}(\mathbf X)}\boldsymbol{z} \)
and assume $\boldsymbol{z}$ independently and uniformly distributed over $r \mathbb S^{p-1}$, for some $r> 0$.
Then in the limit $n,p \to \infty$ with $n/p = \gamma \in (0,1)$, the following asymptotic expression for the generalization error on an unseen sample $\boldsymbol{x} \sim \mathcal N(\mathbf 0 ,\mathbf I)$ holds:
\begin{equation}
    \mathbb E (\boldsymbol{x}^\top \widehat{\boldsymbol{w}}(\boldsymbol{z}) - \boldsymbol{x}^\top \boldsymbol{w}_* )^2 \sim 
    (1 - \gamma) \| \boldsymbol{w}_* \|^2 + \mathbb E \| \widehat{\boldsymbol{w}}(\boldsymbol{z}) - \widehat{\boldsymbol{w}}_*\|^2_2 + \sigma^2 \frac{\gamma}{1 - \gamma}
\end{equation}
\begin{proof}
This asymptotic result is a corollary of Theorem 1 from \cite{hastie2022surprises}. A detailed derivation is provided in the Appendix.
\end{proof}
\end{lemma}

This result shows that, even among exact interpolators, displacement in the null space of the data matrix incurs an explicit test-error penalty.

This behavior can also be observed empirically in neural networks, although with a different polynomial dependence on $\| \hat {\boldsymbol{w}}  - \hat{\boldsymbol{w}}_*\|_2$. In Figure \ref{fig:scaling}, we vary the initialization scale to obtain interpolating solutions with different weight norms and evaluate their test cross-entropy on MNIST and CIFAR-10. In both cases, test loss follows an approximate power law $L \propto \| \boldsymbol{w} \|^\alpha$ with $\alpha \sim 3$. Although the exponent differs from the isotropic Gaussian setting, likely due to differences in loss, architecture, and data geometry, the qualitative polynomial dependence on parameter norm persists. This supports the role of parameter norm as a practical proxy for model complexity. Lower-norm interpolating solutions can therefore be viewed as empirically favored by an Occam-style principle, motivating the need for efficient methods that search for minimum-norm models.

%\textbf{Scaling laws for generalization error and weight norm}

%\textbf{Critically dampened momentum optimization}
\begin{wrapfigure}{r}{0.35\textwidth}
\vspace{-12pt}
  \centering
  \includegraphics[
    width=\linewidth,
    trim={0cm 0cm 1.7cm 0.62cm},
    clip
  ]{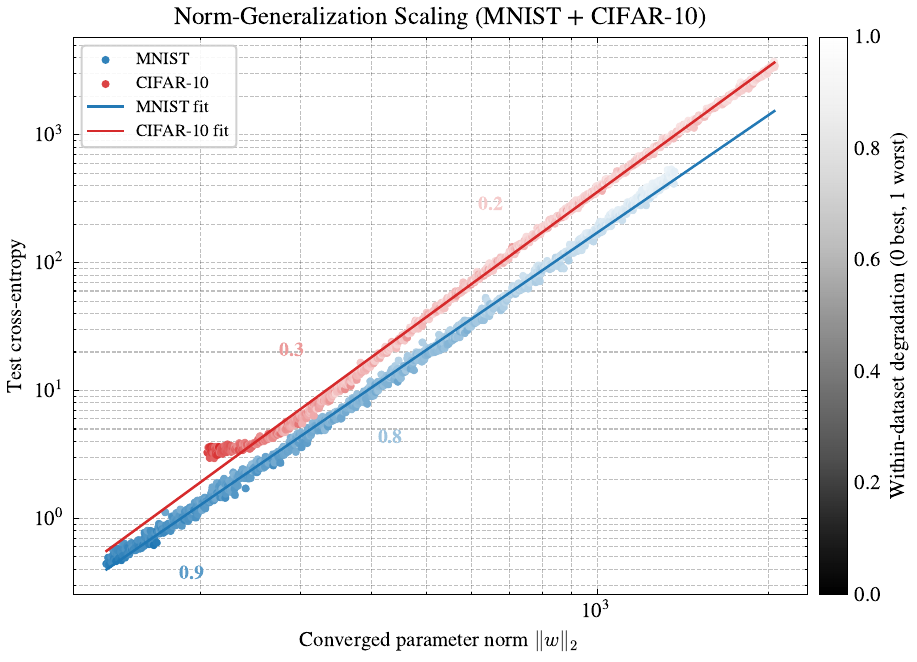}
  \caption{\textbf{Loss-Norm scaling laws.}
  Test accuracy (indicated by point brightness and inline labels) reveals shifts in the classification boundary across different norms.}
  \label{fig:scaling}
\vspace{-10pt}
\end{wrapfigure}
\paragraph{Qualitative and quantitative differences in fitting and complexity minimization}
Recent literature \cite{musat2025geometry,boursier2025theoretical} identifies the onset of grokking as a dynamical transition between a regime of \emph{data memorization} and \emph{norm (complexity) minimization}.
In particular, \cite{boursier2025theoretical} shows the existence of two distinct phases. In the first part of the optimization, the dynamics is dominated by the unregularized gradient flow induced solely by the training data. In this regime, the data memorization, empirical risk is minimized and the optimizer appears to be \emph{resilient} to regularization. 
In this part of the empirical risk minimization, the characteristic timescale strongly depends on data, model architecture and the number of parameters; for linear models, this can be proven to be controlled by an exponential decay parametrized by the smallest nonzero eigenvalue of the uncentered covariance matrix. For Neural Network such analysis is more complex, but studies on lazy training dynamics show analogous results on the Neural Tangent Kernel Gram matrix \cite{lee2019wide}, suggesting a possible connection between the two interpolation dynamics . It is, however, empirically (and theoretically) well-known that larger Neural Network based models are able to interpolate data faster than their smaller counterparts \cite{allen2019convergence}.

The second phase, which is reached only in models with enough capacity to memorize the data, shows a qualitative and quantitative different behavior. The dynamics are shaped mainly by the the effect of regularization; the presence of a complexity penalty in the loss function (such as the squared norm of parameters) steers the optimizer toward solutions of minimal complexity within the basin of attraction. From a quantitative description of such dynamics, the timescale of the second phase for $L^2$ regularization is provably  $\mathcal \tau = \lambda^{-1}$.  We face a fundamental discrepancy between these two regimes of the optimization. 

\begin{remark} The convergence rate to the interpolation threshold can be accelerated by overparametrization, while the slow dynamics of norm minimization is driven by the regularization magnitude.
\end{remark}

\begin{figure}[!htb]
    \centering
    \begin{subfigure}{0.48\linewidth}
    \centering
    \includegraphics[width=1.\linewidth]{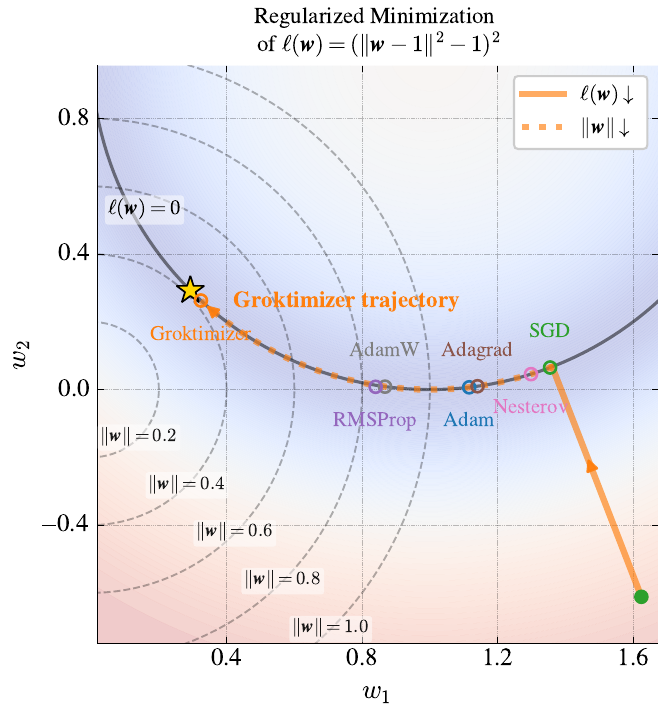}
    \caption{Fixed-budget dynamics on a Modular Addition benchmark. \texttt{GROKtimizer} reaches interpolation without regularization, then applies CDM to minimize norm along the zero-loss manifold, converging faster than standard optimizers.}
    \label{fig:toy_dynamics}
    \end{subfigure}
    \hfill
    \begin{subfigure}{0.48\linewidth}
    \centering
    \includegraphics[width=1.\linewidth]{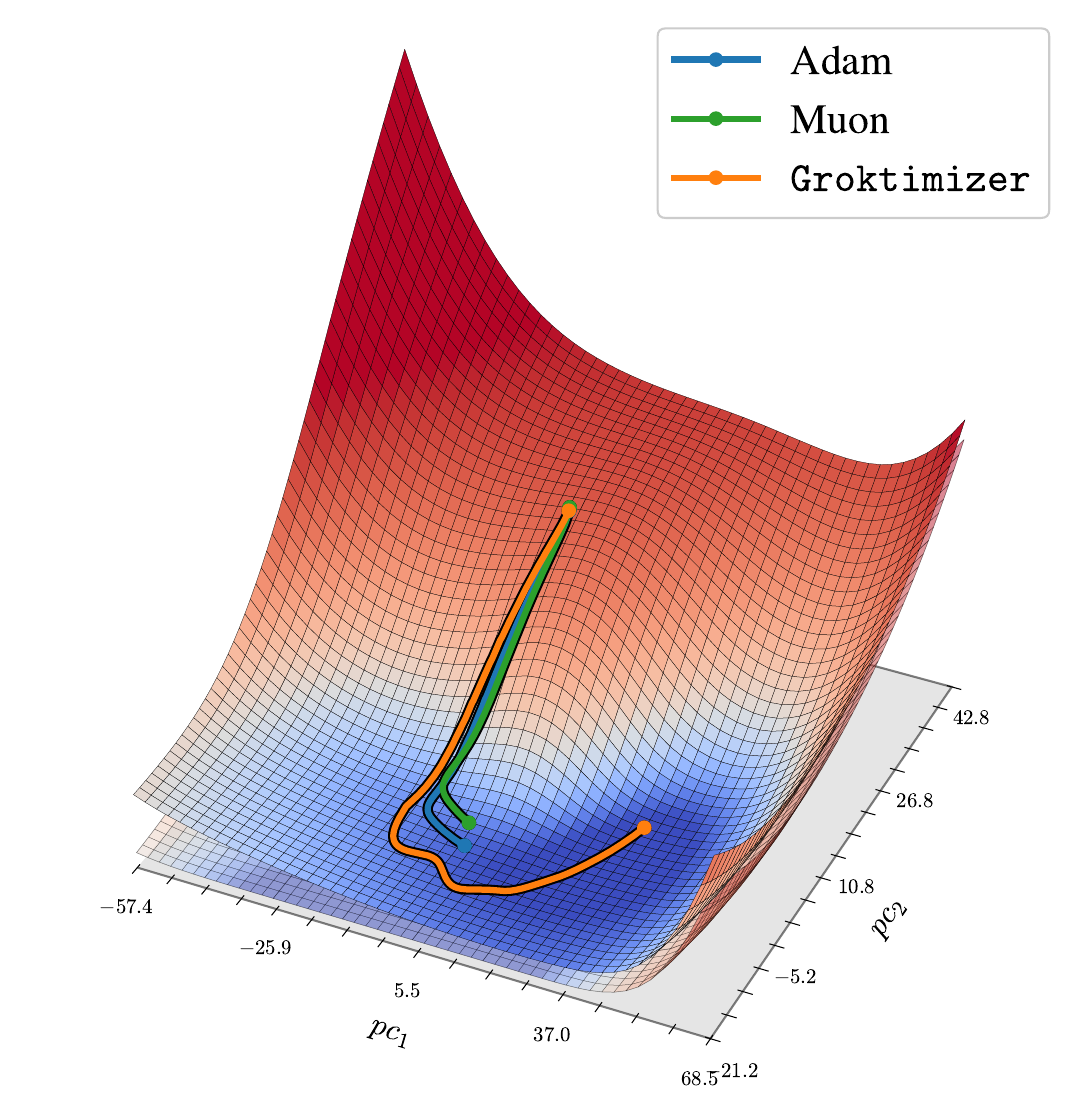}
    \caption{Reduced loss-landscape visualization for Modular Addition. Optimizer trajectories are projected onto a two-dimensional subspace; \texttt{GROKtimizer} uses the regularized landscape to move efficiently through the flat interpolation region toward lower-norm solutions.}
    \label{fig:loss_landscape}
    \end{subfigure}
    \caption{\texttt{GROKtimizer} dynamics before and after interpolation.}
    \label{fig:groktimizer_dynamics}
\end{figure}

\paragraph{Characterizing the after-interpolation behavior of slow dynamics}
Given the different features of the dynamics after interpolation, we now study in detail its nature in the context of neural networks. 

\begin{assumption} \label{ass:quad} Consider a machine learning model $f$ characterized by a set of parameters $\boldsymbol{w}^\star$, obtained by training the network until interpolation of the training set. Then the $L^2$ regularized loss function can be approximated as
\begin{equation}
    \mathcal L_{reg}(\boldsymbol{w}) \approx \mathcal L_{unreg}(\boldsymbol{w}^\star) + \frac{1}{2}(\boldsymbol{w} - \boldsymbol{w}^\star)^\top \mathbf H (\boldsymbol{w} - \boldsymbol{w}^\star) + \frac{1}{2} \lambda \| \boldsymbol{w} \|_2^2
\end{equation}
with $\mathcal L(\boldsymbol{w}^\star)$ being a constant, $\mathbf H$ being the Hessian matrix evaluated in $\boldsymbol{w}^\star$, and $\mathcal L(\boldsymbol{w}^\star) = 0$ in case of Mean Squared Error Loss.
\end{assumption}

This assumption is motivated by the proximity to a local minimum; when the optimizer is captured in an attraction basin, the geometry of the surrounding loss landscape is entirely characterized by the Hessian matrix. Furthermore, under the fast-slow dynamics formalism, we can expect the distance $d^\star = \| \boldsymbol{w} - \boldsymbol{w}^\star \|$ to grow slowly in time thus justifying the Taylor-like approximation.
A clear advantage of this model is its perfect equivalence to the linear case; since in the attraction basin the function is approximately quadratic, we can adopt the same mathematical framework to analyze the dynamics of the optimizer.

\begin{remark}[Manifold of Optima]
     Classical Deep Learning literature \cite{garipov2018loss,benton2021loss}, highlights how in the loss landscape of neural networks optima are connected through a manifold in the parameter space. While our discussion focuses on a linear subspace (the null space of the Hessian matrix) we argue that our model can be reconciled when one considers such null space as the tangent space of such a manifold. Our assumption coincides, in fact, with the idea of small variations  of the tangent space across the various solutions \cite{draxler2018essentially}.
\end{remark}

%As shown in \cite{gur2018gradient}, in fact, gradient descent dynamics in deep networks rapidly concentrate in a low-dimensional subspace associated with the top curvature directions of the loss landscape, which becomes relatively stable during training. This motivates our assumption, corroborating our idea that the characteristic timescale of weights evolution is larger than the one of the Hessian and validating the usage of this assumption in the construction of our method.

Under this model, we can provide the central result of the paper.
\begin{theorem}[Critically Damped Momentum Quadratically Accelerates Convergence]
    Consider a machine learning model $f$ trained to interpolation on any unregularized loss function $\mathcal L$, described by a set of parameters $\boldsymbol{w}^\star$, and assume Assumption \ref{ass:quad} holds. Then, optimizing the regularized loss function $\mathcal L_{reg} = \mathcal L + \frac{1}{2} \lambda \| \cdot \|^2$ initialized on $\boldsymbol{w}^\star$ and optimized through a momentum method tuned to critical damping with $\beta = 1 - 2 \sqrt{\lambda \eta}$  reaches the minimum norm solution of the quadratic approximation quadratically faster than Gradient Descent, and in particular
    \begin{equation}
        E \sim \sqrt{E_{GD}}
    \end{equation}
\end{theorem}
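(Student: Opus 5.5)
We work entirely inside the quadratic model of Assumption \ref{ass:quad}. There the regularized gradient is $\nabla \mathcal L_{reg}(\boldsymbol{w}) = \mathbf H(\boldsymbol{w}-\boldsymbol{w}^\star) + \lambda \boldsymbol{w}$, which is affine. We diagonalize $\mathbf H = \sum_k h_k \boldsymbol{u}_k\boldsymbol{u}_k^\top$ with $h_k \ge 0$.

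\textbf{Reduction to scalar modes.} The minimizer is $\boldsymbol{w}_\lambda = (\mathbf H+\lambda\mathbf I)^{-1}\mathbf H\boldsymbol{w}^\star$. As $\lambda\to 0$ this tends to the projection of $\boldsymbol{w}^\star$ onto $\mathrm{range}(\mathbf H)$, which is the minimum-norm solution of the quadratic approximation. The error $\boldsymbol{e}_t=\boldsymbol{w}_t-\boldsymbol{w}_\lambda$ then decouples along the eigenvectors:
\begin{itemize}
\item gradient descent gives $e_{k,t+1}=(1-\eta(h_k+\lambda))e_{k,t}$;
\item heavy-ball momentum $\boldsymbol{w}_{t+1}=\boldsymbol{w}_t-\eta\nabla\mathcal L_{reg}(\boldsymbol{w}_t)+\beta(\boldsymbol{w}_t-\boldsymbol{w}_{t-1})$ gives the two-term recursion $e_{k,t+1}=(1+\beta-\eta\mu_k)e_{k,t}-\beta e_{k,t-1}$, where $\mu_k=h_k+\lambda$.
\end{itemize}
Stiff directions ($h_k\gg\lambda$) are already contracted by the interpolation phase, since $\boldsymbol{w}^\star$ sits at the bottom of the basin. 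The slowest, and hence rate-determining, components are the flat directions $h_k=0$, where $\mu_k=\lambda$ and $e_{k,0}=w^\star_k$ is exactly the null-space displacement that the minimum-norm solution must remove. So it suffices to compare the two recursions at $\mu=\lambda$, and to check that stiff modes do not become the bottleneck.

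\textbf{Spectral analysis of the flat modes.} For GD the error factor is $(1-\eta\lambda)^t\approx e^{-\eta\lambda t}$, so after $t$ steps $E_{GD}=(1-\eta\lambda)^{t}$. For momentum the characteristic polynomial is $z^2-(1+\beta-\eta\lambda)z+\beta=0$. Critical damping is the double-root condition $(1+\beta-\eta\lambda)^2=4\beta$, i.e. $1+\beta-\eta\lambda = 2\sqrt\beta$, i.e. $\eta\lambda=(1-\sqrt\beta)^2$. This gives $\sqrt\beta=1-\sqrt{\eta\lambda}$, so $\beta=1-2\sqrt{\eta\lambda}+\eta\lambda\approx 1-2\sqrt{\lambda\eta}$, matching the statement to leading order in $\eta\lambda\ll 1$. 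The double root is $z=\sqrt\beta=1-\sqrt{\eta\lambda}$, and the general solution is $e_t=(a+bt)(1-\sqrt{\eta\lambda})^t$. Hence
\[
E\approx (1+c\sqrt{\eta\lambda}\,t)\,e^{-\sqrt{\eta\lambda}\,t}.
\]
Comparing exponents, $e^{-\sqrt{\eta\lambda}t}$ against $e^{-\eta\lambda t}$: since $\eta\lambda\ll1$, $\sqrt{\eta\lambda}\gg\eta\lambda$, so momentum contracts faster. The time to reach a fixed tolerance drops from $\tau_{GD}\sim(\eta\lambda)^{-1}$ to $\tau_{CDM}\sim(\eta\lambda)^{-1/2}$, i.e. $\tau_{CDM}\sim\sqrt{\tau_{GD}}$, which is the quadratic speedup. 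At equal step counts the errors compare as $\log E\sim-\sqrt{\eta\lambda}\,t$ versus $\log E_{GD}\sim -\eta\lambda t$. I would state the claim $E\sim\sqrt{E_{GD}}$ precisely as this relation between characteristic timescales, i.e. between $\log$-error rates in units of $\sqrt{\eta\lambda}$ versus $\eta\lambda$. The polynomial prefactor $(1+ct)$ does not change the exponential rate.

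\textbf{Main obstacle.} The delicate step is justifying that the flat modes are rate-determining under momentum, and making the informal "$\sim$" precise.
\begin{itemize}
\item For stiff modes with $\mu_k>\lambda$, the fixed $\beta$ is underdamped, so those modes oscillate with modulus $\sqrt\beta=1-\sqrt{\eta\lambda}$. This is the same rate as the flat modes, so they are not slower, provided $\eta\mu_k<2(1+\beta)$ for stability. This requires $\eta\, h_{\max}\lesssim 4$, a step-size condition I would add explicitly.
\item I would also argue optimality briefly: by the Nesterov lower bound for first-order methods on quadratics, no method beats a rate $1-\Theta(\sqrt{\mu/L})$. The critically damped choice attains the optimal $\sqrt{\eta\lambda}$ rate on the slowest mode.
\item Finally, the statement's $\beta$ matches critical damping only up to $O(\eta\lambda)$. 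I would handle this discrepancy by a perturbation of the roots: they become complex with modulus $\sqrt\beta$ to first order, which leaves the rate unchanged.
\end{itemize}
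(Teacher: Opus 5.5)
Your argument is correct in substance, but it takes a genuinely different route from the paper. The paper never analyses the discrete recursion. It sets $1-\beta=\gamma^u\sqrt{\eta}$, passes to the formal $\sqrt{\eta}\to 0$ limit $\ddot{w}+\gamma^u\dot{w}+\nabla\mathcal{L}_{reg}(w)=0$, and decouples this into damped harmonic oscillators along the Hessian eigenbasis. It critically damps the null-space modes with $\gamma^u=2\sqrt{\lambda}$. It solves the stiff modes in closed form from $\tilde w_i(0)=\tilde w_i^*$, $\partial_t\tilde w_i(0)=0$, which shows $O(\lambda/\lambda_i)$ oscillations decaying at the same rate $e^{-\sqrt{\lambda}t}$. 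Finally it converts timescales to epoch counts via $E\,\Delta t\sim\tau$, with $\Delta t=\sqrt{\eta}$ for momentum and $\Delta t=\eta$ for gradient descent.

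You work instead with the characteristic polynomial of the heavy-ball recursion itself, which buys three things. You get the iteration counts $(\eta\lambda)^{-1/2}$ versus $(\eta\lambda)^{-1}$ directly, with no heuristic bookkeeping of two different time steps. You get the exact discrete critical value $\beta=(1-\sqrt{\eta\lambda})^2$, of which the paper's $\beta=1-2\sqrt{\lambda\eta}$ is the first-order truncation. And the step-size restriction appears naturally; it is invisible in the ODE limit, and the paper handles it separately in Theorem 2. The paper's route buys the oscillator picture and explicit transient trajectories, including stiff-mode amplitudes, which you control only through the small initial error $\lambda w^\star_k/(h_k+\lambda)$.

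In the paper $E$ denotes the number of epochs, so $E\sim\sqrt{E_{GD}}$ is exactly your timescale statement. Your intermediate reading of $E$ as an error value would make the relation false, since then $E\approx E_{GD}^{1/\sqrt{\eta\lambda}}$.

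Two side remarks need correcting, though neither changes the conclusion.

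\textbf{The flat modes are overdamped, not underdamped.} The statement's $\beta=1-2\sqrt{\eta\lambda}$ lies \emph{below} the exact critical value. With $s=\sqrt{\eta\lambda}$ the discriminant is $(2-2s-s^2)^2-4(1-2s)=4s^3+s^4>0$. So the roots are real, and the dominant one is $1-s+s^{3/2}+O(s^2)$, not a complex pair of modulus $\sqrt{\beta}$. The per-step rate is therefore $\sqrt{\eta\lambda}\,\bigl(1-O((\eta\lambda)^{1/4})\bigr)$. The leading order survives, but because a double root splits like a square root of the perturbation, the relative correction is of size $(\eta\lambda)^{1/4}$.

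\textbf{The stiff-mode condition is slightly too weak.} Stiff modes share the modulus $\sqrt{\beta}$ only while their roots are complex, i.e. for $\eta\mu_k\le(1+\sqrt{\beta})^2$. The stability bound $2(1+\beta)$ exceeds this by $(1-\sqrt{\beta})^2\approx\eta\lambda$. In that window the roots are real, the mode decays more slowly than $\sqrt{\beta}$, and it stalls completely at the stability edge. The condition you should add is therefore $\eta(h_{\max}+\lambda)\le(1+\sqrt{\beta})^2$, not mere stability.
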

\begin{proof}
Let us define the quantity
\begin{equation}
    \gamma^u  := \frac{1}{\sqrt \eta} ( 1 - \beta ),
\end{equation}
then by a Gradient Flow type of argument we can derive the continuous characterization of the optimizer dynamics in the $\sqrt{\eta} \to 0$ limit
\begin{equation}
    \ddot{\boldsymbol{w}} + \gamma^u \dot{\boldsymbol{w}} +   \nabla \mathcal L(\boldsymbol{w}) = \boldsymbol{0}
\end{equation}
Under the assumption of local convexity, we can express the dynamics projected on the coordinate system defined by the eigenbasis $\{(\lambda_i, \boldsymbol{v}_i)\}_{i=1}^N$ induced by the Hessian matrix $\mathbf H$ as 
\begin{equation}
   \boldsymbol{v}_i^\top \ddot {\boldsymbol{w}} +
   \gamma^u \boldsymbol{v}_i^\top \dot {\boldsymbol{w}} +\boldsymbol{v}_i^\top \nabla \mathcal L(\boldsymbol{w}) = 0 \ \ \ \forall i 
\end{equation}
which, under explicit representation of the gradient $\nabla \mathcal L$ and application of Assumption 1, admits the form
\begin{equation}
\begin{split}
   \partial_t^2 \left \{ \boldsymbol{v}_i^\top  {\boldsymbol{w}} \right\} +
    \gamma^u \partial_t \left\{ \boldsymbol{v}_i^\top {\boldsymbol{w}} \right\}
   +  (\lambda_i + \lambda) \boldsymbol{v}_i^\top \boldsymbol{w} -
   \lambda_i \boldsymbol{v}_i^\top \boldsymbol{w}^*
   & = 
0
\end{split}
\end{equation}
and defining $\tilde x_i = \boldsymbol{v}_i^\top \boldsymbol{x}$ we obtain the system of decoupled second order ordinary differential equations

\begin{equation}
    \partial_t^2 \tilde w_i + \gamma^u \partial_t \tilde w_i + 
    (\lambda_i + \lambda) \tilde w_i - \lambda_i \tilde w^*_i = 0 \ \ \ \forall i
\end{equation}

which represents a system of independent harmonic oscillators. Components of the parameter vector associated to zero eigenvalues of the Hessian are \emph{critically damped} when $\gamma^u = 2 \sqrt{ \lambda}$
and in this case their evolution is described as 
\begin{equation}
    \tilde w^{CD}_i(t) = 
    \tilde w^{CD}_i(0) \left(1 + {\gamma^u}{t} \right) e^{ - \frac{\gamma^u}{2}t}  \ \ \ \forall i : \lambda_i =0 
\end{equation}
which follows the scaling
\begin{equation}
     \tilde w_i^{CD}(t) \sim  e^{ - \sqrt{ \lambda } t }  \ \ \ \forall i : \lambda_i = 0
\end{equation}
against the $\sim e^{- 2 \lambda t}$ of classical gradient descent. Substituting $\gamma^u$ into $\beta$ we obtain the momentum rate 
\begin{equation}
    \beta = 1 - 2 \sqrt{ \lambda \eta}
\end{equation}
that enables the critical damping of the optimization. Since the characteristic time scale $\tau$ of an optimization method and number of epochs $E$ is described by the approximation
\begin{equation}
    E \Delta t \sim \tau 
\end{equation}
where $\Delta t$ is the discretization step of the continuous approximation of the optimizer, substituting $\tau_{CDM} = \lambda^{-1/2}$ and $\tau_{GD} = \lambda^{-1}$, and substituting $\Delta t_{CDM} = \sqrt{\eta}$ and $\Delta t_{GD} = \eta$ we obtain
\begin{equation}
    E_{GD} \sim \lambda^{-1} \eta^{-1}, \ \ \
    E_{CDM} \sim \lambda^{-1/2} \eta^{- 1/2} \implies E_{CDM} \sim \sqrt{E_{GD}}
\end{equation}
A detailed proof and the proof of small $\mathcal O(\lambda \lambda^{-1}_i)$ fluctuations for nondegenerate components are provided in the Appendix.
\end{proof}

Remarkably, in the context of first-order black-box optimization of smooth convex functions, Nesterov \cite{Nesterov1983AMF,nesterov2013introductory} established that any algorithm accessing the objective and its gradients cannot, in general, achieve a convergence rate faster than $O(1/k^2)$ in terms of function value error after $k$ iterations. Hence, within the class of first-order methods, our proposed quadratic improvement over standard gradient descent is asymptotically information-theoretically optimal.

\noindent
\paragraph{Optimal Learning Rate for CDM.}
A fundamental question behind the introduction of our biphasic framework is whether an optimal timestep exists in the discretization of the critically damped dynamics. The following result formalizes this idea, providing a stable maximal learning rate that can be adopted under Assumption \ref{ass:quad}.

\begin{theorem}
Consider a basin of attraction characterized by a Hessian matrix $\mathbf H$ whose largest eigenvalue is $\lambda_{\max} > 0$. Then the maximum admissable learning rate is given by the relationship
    \begin{equation}
        \eta_{\max} = \frac{
4 \left( 
\sqrt{\lambda_{\max} + 2 \lambda} - \sqrt{\lambda}
\right)^2
    }{(\lambda_{\max} + \lambda)^2}
    \sim 4 \lambda_{\max}^{- 1}
    \end{equation}
\end{theorem}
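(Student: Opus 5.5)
We will analyze the discrete heavy-ball iteration directly on the quadratic model of Assumption \ref{ass:quad}, working in the eigenbasis of $\mathbf H$ as in the proof of the previous theorem. Writing the momentum update as $\boldsymbol{w}_{k+1} = \boldsymbol{w}_k - \eta \nabla \mathcal L_{reg}(\boldsymbol{w}_k) + \beta(\boldsymbol{w}_k - \boldsymbol{w}_{k-1})$, each eigencomponent (shifted by its fixed point) obeys a scalar linear recurrence with effective curvature $\mu_i = \lambda_i + \lambda$:
\begin{equation}
    \tilde w_{k+1} = (1 + \beta - \eta \mu_i)\,\tilde w_k - \beta\, \tilde w_{k-1}.
\end{equation}
\emph{Admissible} will mean that every such recurrence is asymptotically stable, i.e.\ both roots of the characteristic polynomial $z^2 - (1+\beta-\eta\mu_i) z + \beta$ lie strictly inside the unit disk.

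For a real monic quadratic $z^2 + a z + b$, the Jury (Schur--Cohn) conditions give stability if and only if $|b| < 1$ and $|a| < 1 + b$. Here $b = \beta$ and $a = -(1+\beta-\eta\mu_i)$. The condition $1+\beta-\eta\mu_i < 1+\beta$ is automatic since $\eta\mu_i>0$. The binding constraint is therefore
\begin{equation}
    \eta \mu_i < 2(1+\beta).
\end{equation}
We then substitute the critical-damping choice $\beta = 1 - 2\sqrt{\lambda\eta}$ from the previous theorem. This turns the constraint into $\eta\mu_i < 4 - 4\sqrt{\lambda\eta}$. Setting $s = \sqrt\eta$, it becomes the quadratic inequality
\begin{equation}
    \mu_i s^2 + 4\sqrt{\lambda}\, s - 4 < 0.
\end{equation}
Its positive root is $s^\ast = 2(\sqrt{\mu_i+\lambda}-\sqrt\lambda)/\mu_i$. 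Hence the admissible step sizes for mode $i$ are $\eta < 4(\sqrt{\mu_i+\lambda}-\sqrt\lambda)^2/\mu_i^2$.

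Next we check that this bound is decreasing in $\mu_i$, so that the worst case is the stiffest mode $\mu = \lambda_{\max} + \lambda$. This is clear from $s^\ast = 2/(\sqrt{\mu_i+\lambda}+\sqrt\lambda)$. Substituting $\mu = \lambda_{\max}+\lambda$ yields exactly the stated expression for $\eta_{\max}$. We must also verify the remaining Jury condition $|\beta|<1$, i.e.\ $0 < \sqrt{\lambda\eta} < 1$. At $\eta=\eta_{\max}$ we have $\sqrt{\lambda\eta_{\max}} = 2\sqrt\lambda/(\sqrt{\lambda_{\max}+2\lambda}+\sqrt\lambda) < 1$ whenever $\lambda_{\max}>0$, so this condition is not binding. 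Finally, in the regime $\lambda \ll \lambda_{\max}$ of interest, expanding the numerator as $4\lambda_{\max}(1+O(\sqrt{\lambda/\lambda_{\max}}))$ and the denominator as $\lambda_{\max}^2(1+O(\lambda/\lambda_{\max}))$ gives $\eta_{\max}\sim 4\lambda_{\max}^{-1}$.

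The main obstacle is interpretive rather than computational. One must fix the exact discrete form of the momentum update so that the critical-damping $\beta$ from the continuous analysis is plugged into the correct recurrence. One must also justify that stability of the linearized per-mode recurrences is the right notion of \emph{admissible}, which rests entirely on Assumption \ref{ass:quad}. Once the heavy-ball form above is committed to, the rest is the short Jury-criterion computation.
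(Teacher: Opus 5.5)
Your proposal is correct and follows the same route as the paper. Both take the heavy-ball stability threshold $\eta < 2(1+\beta)/(\lambda_{\max}+\lambda)$, substitute the critical-damping relation $\beta = 1-2\sqrt{\lambda\eta}$, and solve the resulting quadratic in $\sqrt{\eta}$. The paper simply cites that threshold as known, whereas you derive it per mode from the Jury criterion and additionally check $|\beta|<1$ and that the stiffest mode is the binding one.
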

\begin{proof}
This result is a direct corollary of the stability threshold $\eta \le 2(1 + \beta) \lambda_{\max}^{-1}$ in heavy ball optimizers, and the identity is obtained solving a two-dimensional nonlinear system that encodes the mutual dependency of $\beta$ and $\eta$ in the CDM framework. A detailed proof is provided in the Appendix.
\end{proof}

\begin{minipage}{0.5\textwidth}
We study this result numerically, and it is presented in Figure \ref{fig:wishart}. By initializing a rank deficient Wishart random matrix $\mathbf H$, we define a $L^2$ regularized random loss landscape based on the quadratic form $\boldsymbol{w} \mapsto (\boldsymbol{w} - \boldsymbol{w}^*)^\top \mathbf H (\boldsymbol{w} - \boldsymbol{w}^*)$, and initialize $n$ different CDM optimizers in a point of the nullspace of $\mathbf H$, each with a different learning rate $\eta$. To enforce the minimum norm condition of the planted solution, we sample a vector $\widetilde{\boldsymbol{w}*}$ and define $\boldsymbol{w}^* := \mathbf H^+ \mathbf H \widetilde{\boldsymbol{w}*}$ and denote it as the \emph{Pseudo-Inverse } solution; we know in fact that this specific expression solves the Ridgeless-Regression problem and thus provides the minimum norm optimum for this specific random energy landscape. Numerically, we observe that the number of iterations spent seamlessly inside the ball of radius $\sqrt{c}$ centered in the Pseudo-Inverse solution grows monotonically with $\eta$ until the reach of $\eta_{max}$, which induces a sharp phase transition between the convergence regime and the unstable regime.
\end{minipage}
\hfill
\begin{minipage}{0.48\textwidth}
    \centering
    \includegraphics[width=\linewidth]{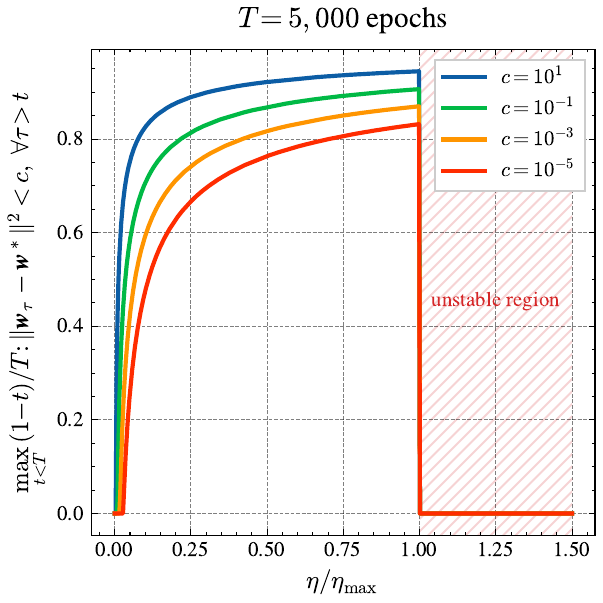}
    \captionof{figure}{Optimal learning lies at the \emph{edge of criticality} between convergence and instability.}
    \label{fig:wishart}
\end{minipage}

From the perspective of practical application, $\eta$ of the CDM phase can be treated as a hyperparameter of our method. Alternatively, a practitioner may be interested in an efficient way to estimate $\lambda_{\max}$, compatibly with the computational intractability of the Hessian matrix in neural networks, in order to approximate the optimal learning rate in an online fashion prior to the norm minimization phase. An efficient solution for this problem is to employ a Power-Iteration based estimate on top of the Perlmutter Trick, which allows to efficiently compute the action of the Hessian against a vector in a Matrix-Free fashion. The classical power method $\boldsymbol{v}_{k+1} \propto \mathbf H \boldsymbol{v}_{k}, \lambda_{k} = \boldsymbol{v}_{k}^\top \mathbf H \boldsymbol{v}_k $ has however the disadvantage of negative bias; since the sequence of eigenvalues estimates are bounded from above by the largest eigenvalue, $\hat \eta_{\max}$ will be an overestimate of the critical learning rate and thus part of the unstable region. To remove this systematic bias and accelerate convergence we adopt Aitken correction of the power method, which also accelerates the convergence rate of the estimate, making it suitable for a few-step ($\sim 10,20$ in our experiments) estimation phase prior of CDM. Remarkably, the application of Aitken correction has a negligible computational cost, since the number of Hessian-vector actions is one per iteration as in classical Power Method and the correction requires only the additional storage of the previous three eigenvalues, which are scalars. Relaxing our working assumption, moreover, a corrective multiplicative coefficient $\alpha \in (0,1)$ may be applied to the estimate, to make the method robust to possible changes of curvature.

\section{Related Work}

\paragraph{Optimization dynamics and momentum methods.}
Momentum-based methods are among the most widely used tools for accelerating first-order optimization. Classical heavy-ball momentum \cite{Polyak1964} and Nesterov acceleration \cite{Nesterov1983AMF} show that optimizer dynamics can substantially change convergence rates, while later work emphasized the practical importance of momentum, initialization, and conditioning in deep neural network training \cite{pmlr-v28-sutskever13,}. Adaptive optimizers such as Adam \cite{kingma2015adam}, AdamW \cite{Loshchilov2017DecoupledWD}, and related variants have become standard in deep learning, although their behavior is governed by generic stability and adaptivity considerations rather than the geometry of a specific training regime. The specific regime of critical damping, the boundary between oscillatory and overdamped decay in a second-order dynamical system, is well-studied in control theory and classical mechanics as the condition that minimizes settling time without oscillation \cite{Ogata2001}. Our work connects this classical notion directly to post-interpolation optimization: under a local quadratic model of the loss basin, critically damped momentum yields the fastest non-oscillatory decay of the norm-minimizing dynamics along flat directions.

\paragraph{Implicit bias and post-interpolation solution selection.}
A large body of work studies how optimization selects among multiple interpolating solutions in overparameterized models. In linear models and related regimes, gradient-based methods are known to exhibit implicit biases toward particular interpolating predictors, such as minimum-norm or max-margin solutions \cite{soudry2018the,gunasekar2018implicit}. Similar ideas have been connected to implicit regularization and generalization in deep networks \cite{neyshabur2017implicit}, and the statistical properties of minimum-norm interpolation have been characterized in high-dimensional asymptotics \cite{hastie2022surprises}. Recent analyses of grokking and post-interpolation training identify a more explicit two-phase structure: rapid empirical risk minimization followed by slower regularization-driven norm minimization on or near the interpolation manifold \cite{boursier2025theoretical,musat2025geometry}. Our work builds directly on this two-phase view, providing an optimizer explicitly designed for the second phase.

\paragraph{Grokking-inspired optimizer design.}
Grokking provides a useful model system for studying the separation between fitting and generalization. Since its original observation in small algorithmic tasks \cite{power2022grokking}, grokking has been interpreted through several complementary mechanisms, including transitions from lazy to rich feature learning \cite{kumar2023grokking,kumar2024grokking}, circuit efficiency and implicit bias \cite{varma2023explaining}, oscillatory weight-norm dynamics \cite{thilak2022slingshot}, and the emergence of structured representations such as Fourier features \cite{nanda2023fourier}. Taken together, these accounts provide valuable insight into the mechanisms underlying delayed generalization, but remain largely descriptive: they characterize when and why grokking occurs without offering a principled optimization strategy for shortening the delay.

\paragraph{Accelerating post-interpolation generalization.}
Several recent methods aim to accelerate delayed generalization. Grokfast amplifies slow gradient components \cite{lee2024grokfast}, NeuralGrok learns gradient transformations through auxiliary networks \cite{zhou2025neuralgrok}, Muon applies approximate second-order spectral updates \cite{tveit2025muon}. Other approaches modify the architecture or data distribution, for example through Kolmogorov--Arnold representations or data augmentation strategies \cite{park2025acceleration,abramov2025grokking}. These methods demonstrate that delayed generalization can be accelerated, but typically rely on gradient filtering, learned transformations, architectural choices, or task-specific data interventions. In contrast, \texttt{GROKtimizer} is derived from a local quadratic model of the post-interpolation basin and yields a simple biphasic optimizer: rapidly reach interpolation, then switch to critically damped, weight-decayed momentum dynamics to accelerate movement toward low-norm interpolating solutions.

\section{Experimental results}

We now empirically investigate whether an optimizer inspired by the two-phase structure of grokking can accelerate post-interpolation generalization. Our experiments are designed to answer four questions:
\begin{enumerate}
    \item Does \texttt{GROKtimizer} reduce delayed generalization on synthetic grokking benchmarks?
    \item Do the same dynamics transfer to noisy real-world datasets?
    \item In data-scarce regimes, does \texttt{GROKtimizer} improve sample efficiency?
    \item Can the same biphasic principle be scaled to modern transformer architectures for language-model pretraining?
\end{enumerate}

\paragraph{Evaluation protocol.}
Across all experiments, we deliberately operate in data-scarce regimes. Such regimes arise naturally in domains such as genomics, multi-omics, and clinical prediction, where the number of measured variables can be large relative to the number of labeled examples. For image and language tasks, data scarcity is less intrinsic, but we use controlled low-data settings to isolate the effect of optimizer dynamics rather than dataset scale. All optimizers are evaluated using the same model architecture, data split, batch size, training budget, and non-optimizer hyperparameters. We compare against Adam, AdamW, Muon, and SGD. The distinguishing feature of \texttt{GROKtimizer} is its biphasic schedule: in the first phase, weight decay is set to zero to prioritize rapid interpolation; after interpolation, the optimizer switches to a critically damped, weight-decayed momentum phase to minimize parameter norm within the interpolating region.

\begin{table}[t]
\centering

\caption{\textbf{Synthetic benchmark results.}
Gaussian reports validation loss, where lower is better; all other columns report test accuracy in percent, where higher is better.}
\label{tab:main_benchmark_results}
\scriptsize
\setlength{\tabcolsep}{4.2pt}
\renewcommand{\arraystretch}{1.15}
\resizebox{\textwidth}{!}{%
\begin{tabular}{lcccccc}
\toprule
\textbf{Method}
& \makecell{\textbf{Gaussian}\\\textbf{Val. Loss} $\downarrow$}
& \makecell{\textbf{Binary}\\\textbf{Addition} $\uparrow$}
& \makecell{\textbf{Modular}\\\textbf{Addition} $\uparrow$}
& \makecell{\textbf{RF Teacher}\\\textbf{Linear} $\uparrow$}
& \makecell{\textbf{Sparse}\\\textbf{Parity} $\uparrow$}
& \makecell{\textbf{Two-Subspace}\\\textbf{Linear} $\uparrow$} \\
\midrule
Adam
& $121.7 \pm 12.7$
& $82.9 \pm 0.9$
& $70.1 \pm 16.8$
& $78.6 \pm 5.5$
& $60.5 \pm 2.2$
& $79.5 \pm 1.0$ \\

AdamW
& $151.4 \pm 17.0$
& $36.6 \pm 2.2$
& $0.0 \pm 0.0$
& $55.5 \pm 2.5$
& $69.6 \pm 3.8$
& $66.3 \pm 1.7$ \\

Muon
& $94.2 \pm 9.5$
& $0.31 \pm 0.07$
& $0.27 \pm 0.12$
& $50.7 \pm 1.8$
& $52.1 \pm 0.8$
& $59.2 \pm 1.3$ \\

SGD
& $94.2 \pm 9.5$
& $0.04 \pm 0.04$
& $1.83 \pm 0.20$
& $56.1 \pm 1.9$
& $50.3 \pm 0.9$
& $58.3 \pm 1.3$ \\

\midrule
\texttt{GROKtimizer}
& $\mathbf{(4.3 \pm 0.7)\times 10^{-5}}$
& $\mathbf{89.5 \pm 1.0}$
& $\mathbf{95.1 \pm 1.4}$
& $\mathbf{84.3 \pm 6.5}$
& $\mathbf{89.1 \pm 3.2}$
& $\mathbf{79.7 \pm 1.0}$ \\

\bottomrule
\end{tabular}%
}
\vspace{-20pt}
\end{table}

\paragraph{Synthetic grokking benchmarks.} We first evaluate \texttt{GROKtimizer} on synthetic benchmarks designed to expose a separation between fitting and generalization. These tasks are useful testbeds for our hypothesis because they admit rapid training-set interpolation while requiring additional optimization dynamics to recover a low-complexity solution that generalizes. Table~\ref{tab:main_benchmark_results} reports the main results over $n=10$ random seeds. \texttt{GROKtimizer} achieves the best mean performance on all six benchmarks. On the Gaussian task, it reduces validation loss by several orders of magnitude relative to the strongest baseline, indicating that the post-interpolation phase effectively drives the model toward a lower-complexity interpolating solution.

\begin{wraptable}{r}{0.5\textwidth}
\vspace{-10pt}
\centering
\caption{\textbf{Additional benchmark results.}
Results are computed over $n=5$ random seeds.
Leukemia and TCGA report test accuracy in percent, where higher is better; QM9 reports validation loss, where lower is better. SGD did not converge for QM9.}
\label{tab:additional_results_wrap}
\scriptsize
\setlength{\tabcolsep}{3.5pt}
\renewcommand{\arraystretch}{1.12}
\resizebox{\linewidth}{!}{%
\begin{tabular}{lccc}
\toprule
\textbf{Method}
& \makecell{\textbf{Leukemia}\\\textbf{Acc.} $\uparrow$}
& \makecell{\textbf{TCGA}\\\textbf{Acc.} $\uparrow$}
& \makecell{\textbf{QM9}\\\textbf{Val. Loss} $\downarrow$} \\
\midrule
Adam
& $57.2 \pm 5.9$
& $89.9 \pm 6.4$
& $0.28 \pm 0.15$ \\

AdamW
& $56.9 \pm 6.2$
& $89.8 \pm 6.4$
& $0.35 \pm 0.21$ \\

Muon
& $56.2 \pm 8.6$
& $66.4 \pm 27.3$
& $0.38 \pm 0.08$ \\

SGD
& $57.2 \pm 6.5$
& $89.1 \pm 6.0$
& {--} \\
\midrule
\texttt{GROKtimizer}
& $\mathbf{80.3 \pm 1.4}$
& $\mathbf{97.5 \pm 1.8}$
& $\mathbf{0.16 \pm 0.11}$ \\
\bottomrule
\end{tabular}%
}
\vspace{-10pt}
\end{wraptable}

\textbf{Transfer to real-world high-dimensional datasets.}
We next evaluate whether the same biphasic optimization principle transfers beyond synthetic grokking benchmarks. We consider two high-dimensional biomedical classification tasks, Leukemia and TCGA, using logistic regression models, and one molecular property prediction task, QM9, using a graph neural network. Table~\ref{tab:additional_results_wrap} reports results over $n=5$ random seeds. \texttt{GROKtimizer} achieves the best mean performance on all three datasets.  The strongest gains occur on the biomedical datasets, where the number of features is large relative to the number of samples. This is consistent with our hypothesis that explicit norm minimization is most useful when the training data underdetermines the solution and many interpolating predictors remain compatible with the observed labels. 

\begin{wrapfigure}{r}{0.5\textwidth}
\vspace{-12pt}
    \centering
    \includegraphics[width=0.5\textwidth]{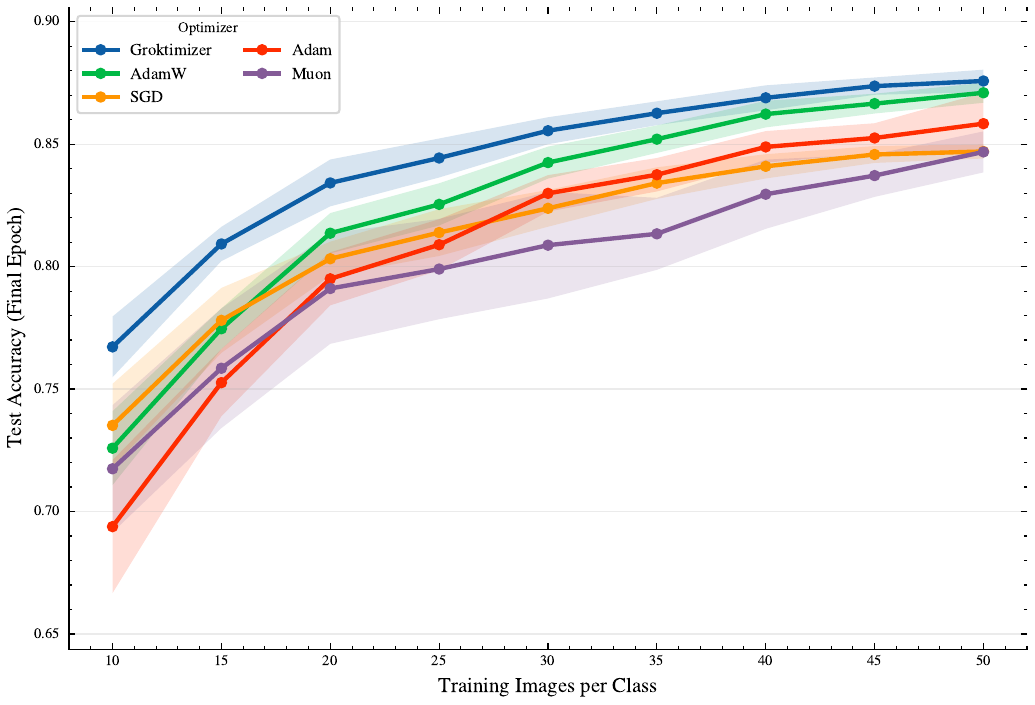}
    \caption{\textbf{MNIST sample efficiency under data scarcity.}
    Test accuracy as the number of training examples per class increases from $10$ to $50$. Results averaged across $n=10$ seeds.}
    \label{fig:scarcity}
\vspace{-10pt}
\end{wrapfigure}
\textbf{Sample efficiency in low-data regimes.}
We next test whether the post-interpolation norm-minimization phase improves sample efficiency. We construct low-data MNIST classification tasks by varying the number of training examples per class from $10$ to $50$, while keeping the architecture and training budget fixed across optimizers. We use an overparameterized MLP with approximately $1.46$ million parameters, creating a regime in which the model can interpolate the training data but the choice of interpolating solution strongly affects generalization. Figure~\ref{fig:scarcity} shows that \texttt{GROKtimizer} provides the largest advantage in the most data-scarce setting. As the number of samples per class increases, the gap between optimizers becomes smaller, consistent with the solution space becoming more constrained by data.

\paragraph{Language-model pretraining.}
Finally, we study whether the biphasic optimization pattern persists when moving from small synthetic and tabular settings to transformer-based language-model pretraining. We train two data-constrained language models: a WikiText2 model with 14M parameters and a BabyLMStrictSmall model with 38M parameters. Figure~\ref{fig:llm_pretraining} shows that the relationship between parameter norm and validation performance is less direct in language modeling than in the preceding benchmarks. In particular, Adam can reach lower parameter norms while still obtaining worse validation loss. This is consistent with the additional complications of transformer pretraining, where scale symmetries, adaptive updates, embedding layers, and noisy next-token objectives weaken a simple norm-generalization interpretation. Nevertheless, we still observe a biphasic optimization pattern: after the model reaches low training loss, switching to the norm-minimization phase changes the norm trajectory and produces a small further reduction in validation loss. The results also suggest that the SGD-based interpolation phase without weight decay is better suited to this setting, whereas applying weight decay too early can mask the data signal and hinder fitting.

\section{Discussion}
\begin{wrapfigure}{r}{0.5\textwidth}
\vspace{-12pt}
    \centering
    \includegraphics[width=\linewidth]{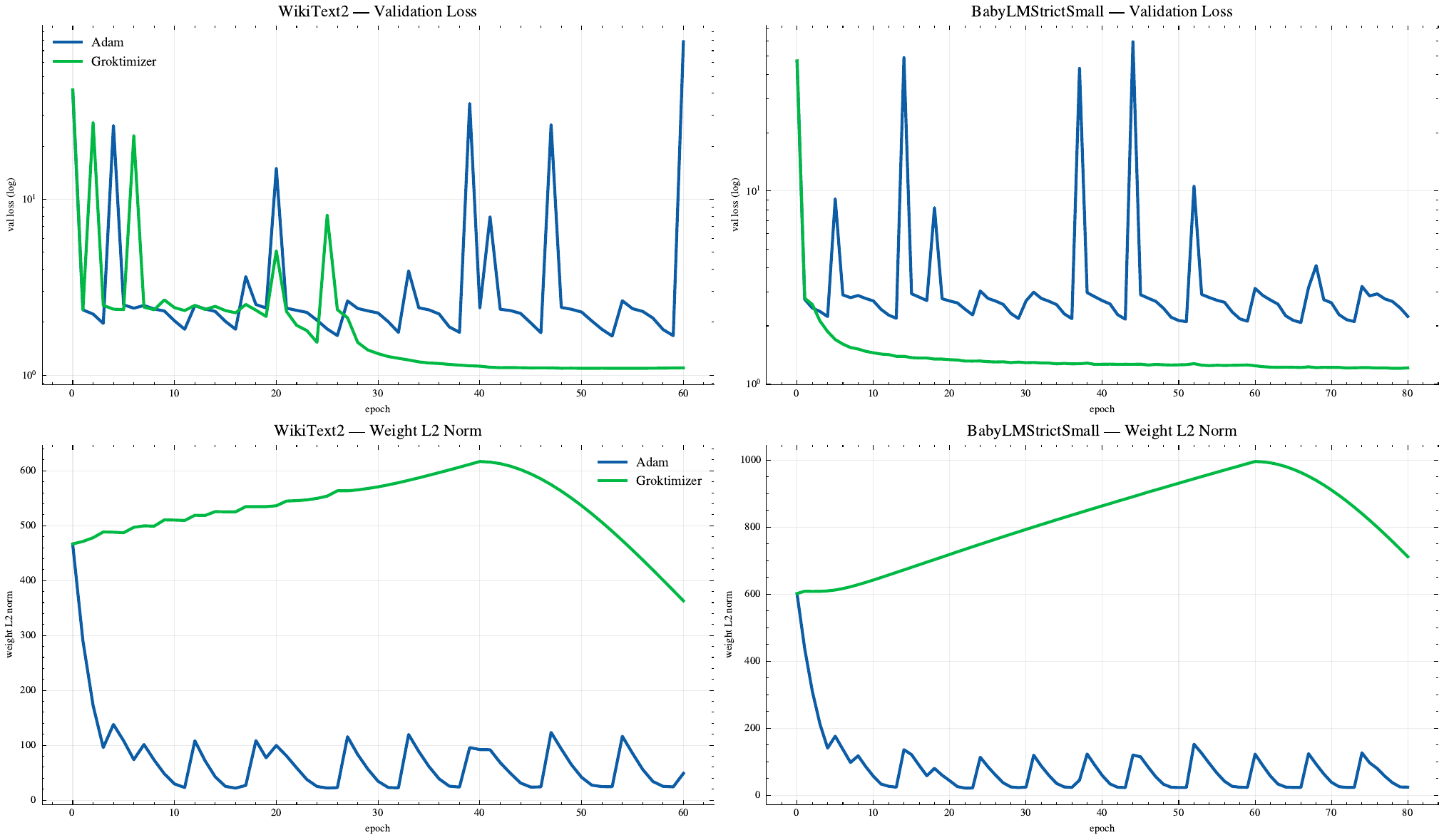}
    \caption{\textbf{Language-model pretraining under data constraints.}
    Validation-loss and parameter norm trajectories for Adam and \texttt{GROKtimizer}.}
    \label{fig:llm_pretraining}
\vspace{-10pt}
\end{wrapfigure}
This work studies model training in high-dimensional, low-sample regimes, where interpolation is often easy but generalization remains difficult. Motivated by the two-phase structure observed in grokking, we proposed \texttt{GROKtimizer}, a biphasic optimization schedule that separates rapid empirical risk minimization from post-interpolation complexity minimization. The central idea is that these two phases impose different optimization requirements: fitting the training data benefits from fast, unregularized movement toward interpolation, whereas generalization after interpolation requires efficient movement within the set of training-compatible solutions toward lower-complexity predictors. Our theoretical analysis formalizes this intuition under a local quadratic approximation of the post-interpolation loss landscape. In this regime, the regularized dynamics along flat Hessian directions reduce to a damped oscillator, and critically damped momentum accelerates convergence toward the low-norm solution relative to gradient descent. 

Empirically, this perspective results in gains observed across our synthetic grokking benchmarks and real-world data-constrained settings. Finally, our language-model pretraining experiments show that the biphasic optimization pattern can still be observed in transformer architectures, but also reveal that the relationship between parameter norm and validation loss becomes less direct.

\paragraph{Limitations.}
The scope of this study is  focused on data-constrained learning. This regime is central in many scientific and biomedical applications, however, it is less representative of large-scale vision or language pretraining, where data availability is often the dominant driver of performance. A second limitation is that our method assumes the model has sufficient capacity to reach interpolation. The proposed schedule is meaningful only once there exists a post-interpolation regime in which the empirical loss no longer strongly distinguishes between candidate solutions. Third, our theoretical analysis relies on a local quadratic approximation of the regularized loss landscape after interpolation --- albeit empirically relaxed to slow evolution of the eigenbasis. Finally, our approach treats parameter norm as an operational proxy for complexity. 

\paragraph{Connection to flat minima hypothesis.}
 The correlation between sharpness of minima and generalization has a long history in Deep Learning \cite{kaddour2022flat, hochreiter1997flat}. However, naive measures of sharpness can be affected by reparameterization and scale symmetries \cite{dinh2017sharp}.
%In our framework, flatness enters through the null space, or near-null space, of the Hessian after interpolation. These directions define a local region in which the empirical loss changes little, allowing the optimizer to move while preserving training performance. Weight decay then selects a lower-norm point within this approximately flat region. Thus, \texttt{GROKtimizer} can be viewed as an explicit procedure for exploiting post-interpolation flat directions: rather than merely observing that flat minima are associated with generalization, it uses the geometry of the flat region to accelerate movement toward a simpler interpolating solution.
In our analysis, we have employed the null space of the Hessian matrix as the domain where the norm (and thus complexity) minimization happens; when the surface of interpolating solutions has higher latent dimension, the optimization domain becomes larger leading to an higher number of low complexity solutions. We thus argue that using the dimensionality of the null space of the Hessian as measure of flatness may be, in future works, a successful strategy. In fact, along with the  norm minimization domain argument, the dimensionality of the kernel of the Hessian is invariant to reparametrization in critical points for most loss functions \cite{lee2006riemannian}. We believe that this is a promising direction, and a possible future investigation.

\printbibliography

@misc{power2022grokking,
  doi = {10.48550/ARXIV.2201.02177},
  url = {https://arxiv.org/abs/2201.02177},
  author = {Power,  Alethea and Burda,  Yuri and Edwards,  Harri and Babuschkin,  Igor and Misra,  Vedant},
  title = {Grokking: Generalization Beyond Overfitting on Small Algorithmic Datasets},
  publisher = {arXiv},
  year = {2022},
  copyright = {arXiv.org perpetual,  non-exclusive license}
}

@article{jacot2018neural,
  title={Neural tangent kernel: Convergence and generalization in neural networks},
  author={Jacot, Arthur and Gabriel, Franck and Hongler, Cl{\'e}ment},
  journal={Advances in neural information processing systems},
  volume={31},
  year={2018}
}

@article{martin2025setol,
  title={SETOL: A Semi-Empirical Theory of (Deep) Learning},
  author={Martin, Charles H and Hinrichs, Christopher},
  journal={arXiv preprint arXiv:2507.17912},
  year={2025}
}

@book{lee2006riemannian,
  title={Riemannian manifolds: an introduction to curvature},
  author={Lee, John M},
  year={2006},
  publisher={Springer Science \& Business Media}
}

@inproceedings{dinh2017sharp,
  title={Sharp minima can generalize for deep nets},
  author={Dinh, Laurent and Pascanu, Razvan and Bengio, Samy and Bengio, Yoshua},
  booktitle={International Conference on Machine Learning},
  pages={1019--1028},
  year={2017},
  organization={PMLR}
}

@article{kaddour2022flat,
  title={When do flat minima optimizers work?},
  author={Kaddour, Jean and Liu, Linqing and Silva, Ricardo and Kusner, Matt J},
  journal={Advances in Neural Information Processing Systems},
  volume={35},
  pages={16577--16595},
  year={2022}
}

@book{nesterov2013introductory,
  title={Introductory lectures on convex optimization: A basic course},
  author={Nesterov, Yurii},
  volume={87},
  year={2013},
  publisher={Springer Science \& Business Media}
}

@article{golub1999molecular,
  title={Molecular classification of cancer: class discovery and class prediction by gene expression monitoring},
  author={Golub, Todd R and Slonim, Donna K and Tamayo, Pablo and Huard, Christine and Gaasenbeek, Michelle and Mesirov, Jill P and Coller, Hilary and Loh, Mignon L and Downing, James R and Caligiuri, Mark A and others},
  journal={science},
  volume={286},
  number={5439},
  pages={531--537},
  year={1999},
  publisher={American Association for the Advancement of Science}
}

@article{prakash2025grokking,
  title={Grokking and Generalization Collapse: Insights from HTSR theory},
  author={Prakash, Hari K and Martin, Charles H},
  journal={arXiv preprint arXiv:2506.04434},
  year={2025}
}

@article{lee2019wide,
  title={Wide neural networks of any depth evolve as linear models under gradient descent},
  author={Lee, Jaehoon and Xiao, Lechao and Schoenholz, Samuel and Bahri, Yasaman and Novak, Roman and Sohl-Dickstein, Jascha and Pennington, Jeffrey},
  journal={Advances in neural information processing systems},
  volume={32},
  year={2019}
}

@inproceedings{allen2019convergence,
  title={A convergence theory for deep learning via over-parameterization},
  author={Allen-Zhu, Zeyuan and Li, Yuanzhi and Song, Zhao},
  booktitle={International conference on machine learning},
  pages={242--252},
  year={2019},
  organization={PMLR}
}

@article{hochreiter1997flat,
  title={Flat minima},
  author={Hochreiter, Sepp and Schmidhuber, J{\"u}rgen},
  journal={Neural computation},
  volume={9},
  number={1},
  pages={1--42},
  year={1997},
  publisher={MIT Press One Rogers Street, Cambridge, MA 02142-1209, USA journals-info~…}
}

@article{bartlett2020benign,
  title={Benign overfitting in linear regression},
  author={Bartlett, Peter L and Long, Philip M and Lugosi, G{\'a}bor and Tsigler, Alexander},
  journal={Proceedings of the National Academy of Sciences},
  volume={117},
  number={48},
  pages={30063--30070},
  year={2020},
  publisher={National Academy of Sciences}
}

@article{jeffares2025not,
  title={Not All Explanations for Deep Learning Phenomena Are Equally Valuable},
  author={Jeffares, Alan and van der Schaar, Mihaela},
  journal={arXiv preprint arXiv:2506.23286},
  year={2025}
}

@article{liu2022towards,
  title={Towards understanding grokking: An effective theory of representation learning},
  author={Liu, Ziming and Kitouni, Ouail and Nolte, Niklas S and Michaud, Eric and Tegmark, Max and Williams, Mike},
  journal={Advances in Neural Information Processing Systems},
  volume={35},
  pages={34651--34663},
  year={2022}
}

@article{balasubramanian1997statistical,
  title={Statistical inference, Occam's razor, and statistical mechanics on the space of probability distributions},
  author={Balasubramanian, Vijay},
  journal={Neural computation},
  volume={9},
  number={2},
  pages={349--368},
  year={1997},
  publisher={MIT Press One Rogers Street, Cambridge, MA 02142-1209, USA journals-info~…}
}

@inproceedings{
kumar2024grokking,
title={Grokking as the transition from lazy to rich training dynamics},
author={Tanishq Kumar and Blake Bordelon and Samuel J. Gershman and Cengiz Pehlevan},
booktitle={The Twelfth International Conference on Learning Representations},
year={2024},
url={https://openreview.net/forum?id=vt5mnLVIVo}
}

@inproceedings{
boursier2025theoretical,
title={A Theoretical Framework for Grokking: Interpolation followed by Riemannian Norm Minimisation},
author={Etienne Boursier and Scott Pesme and Radu-Alexandru Dragomir},
booktitle={The Thirty-ninth Annual Conference on Neural Information Processing Systems},
year={2026},
url={https://openreview.net/forum?id=iSvAAHGFSw}
}

@inproceedings{draxler2018essentially,
  title={Essentially no barriers in neural network energy landscape},
  author={Draxler, Felix and Veschgini, Kambis and Salmhofer, Manfred and Hamprecht, Fred},
  booktitle={International conference on machine learning},
  pages={1309--1318},
  year={2018},
  organization={PMLR}
}

@article{hastie2022surprises,
  title={Surprises in high-dimensional ridgeless least squares interpolation},
  author={Hastie, Trevor and Montanari, Andrea and Rosset, Saharon and Tibshirani, Ryan J},
  journal={Annals of statistics},
  volume={50},
  number={2},
  pages={949},
  year={2022}
}

@inproceedings{benton2021loss,
  title={Loss surface simplexes for mode connecting volumes and fast ensembling},
  author={Benton, Gregory and Maddox, Wesley and Lotfi, Sanae and Wilson, Andrew Gordon Gordon},
  booktitle={International Conference on Machine Learning},
  pages={769--779},
  year={2021},
  organization={PMLR}
}

@article{garipov2018loss,
  title={Loss surfaces, mode connectivity, and fast ensembling of dnns},
  author={Garipov, Timur and Izmailov, Pavel and Podoprikhin, Dmitrii and Vetrov, Dmitry P and Wilson, Andrew G},
  journal={Advances in neural information processing systems},
  volume={31},
  year={2018}
}

@misc{thilak2022slingshot,
  doi = {10.48550/ARXIV.2206.04817},
  url = {https://arxiv.org/abs/2206.04817},
  author = {Thilak,  Vimal and Littwin,  Etai and Zhai,  Shuangfei and Saremi,  Omid and Paiss,  Roni and Susskind,  Joshua},
  title = {The Slingshot Mechanism: An Empirical Study of Adaptive Optimizers and the Grokking Phenomenon},
  publisher = {arXiv},
  year = {2022},
  copyright = {arXiv.org perpetual,  non-exclusive license}
}

@inproceedings{
nanda2023fourier,
title={Progress measures for grokking via mechanistic interpretability},
author={Neel Nanda and Lawrence Chan and Tom Lieberum and Jess Smith and Jacob Steinhardt},
booktitle={The Eleventh International Conference on Learning Representations },
year={2023},
url={https://openreview.net/forum?id=9XFSbDPmdW}
}

@article{kumar2023grokking,
  title={Mechanisms of Grokking: Loss Landscape Separation and Negative Learning Momentum},
  author={Kumar, Tanishq and others},
  journal={arXiv preprint},
  year={2023}
}

@misc{tveit2025muon,
  doi = {10.48550/ARXIV.2504.16041},
  url = {https://arxiv.org/abs/2504.16041},
  author = {Tveit,  Amund and Remseth,  Bjørn and Skogvold,  Arve},
  title = {Muon Optimizer Accelerates Grokking},
  publisher = {arXiv},
  year = {2025},
  copyright = {Creative Commons Attribution 4.0 International}
}

@misc{lee2024grokfast,
  doi = {10.48550/ARXIV.2405.20233},
  url = {https://arxiv.org/abs/2405.20233},
  author = {Lee,  Jaerin and Kang,  Bong Gyun and Kim,  Kihoon and Lee,  Kyoung Mu},
  title = {Grokfast: Accelerated Grokking by Amplifying Slow Gradients},
  publisher = {arXiv},
  year = {2024},
  copyright = {arXiv.org perpetual,  non-exclusive license}
}

@misc{zhou2025neuralgrok,
  doi = {10.48550/ARXIV.2504.17243},
  url = {https://arxiv.org/abs/2504.17243},
  author = {Zhou,  Xinyu and Fan,  Simin and Jaggi,  Martin and Fu,  Jie},
  title = {NeuralGrok: Accelerate Grokking by Neural Gradient Transformation},
  publisher = {arXiv},
  year = {2025},
  copyright = {Creative Commons Attribution Share Alike 4.0 International}
}

@misc{musat2025geometry,
  doi = {10.48550/ARXIV.2511.01938},
  url = {https://arxiv.org/abs/2511.01938},
  author = {Musat,  Tiberiu},
  title = {The Geometry of Grokking: Norm Minimization on the Zero-Loss Manifold},
  publisher = {arXiv},
  year = {2025},
  copyright = {Creative Commons Attribution 4.0 International}
}

@article{park2025acceleration,
  title = {Acceleration of grokking in learning arithmetic operations via Kolmogorov–Arnold representation},
  volume = {640},
  ISSN = {0925-2312},
  url = {http://dx.doi.org/10.1016/j.neucom.2025.130347},
  DOI = {10.1016/j.neucom.2025.130347},
  journal = {Neurocomputing},
  publisher = {Elsevier BV},
  author = {Park,  Yeachan and Kim,  Minseok and Kim,  Yeoneung},
  year = {2025},
  month = Aug,
  pages = {130347}
}

@inproceedings{
abramov2025grokking,
title={Grokking in the Wild: Data Augmentation for Real-World Multi-Hop Reasoning with Transformers},
author={Roman Abramov and Felix Steinbauer and Gjergji Kasneci},
booktitle={Forty-second International Conference on Machine Learning},
year={2025},
url={https://openreview.net/forum?id=lyUJH51URt}
}

@misc{varma2023explaining,
  doi = {10.48550/ARXIV.2309.02390},
  url = {https://arxiv.org/abs/2309.02390},
  author = {Varma,  Vikrant and Shah,  Rohin and Kenton,  Zachary and Kramár,  János and Kumar,  Ramana},
  title = {Explaining grokking through circuit efficiency},
  publisher = {arXiv},
  year = {2023},
  copyright = {arXiv.org perpetual,  non-exclusive license}
}

@inproceedings{
soudry2018the,
title={The Implicit Bias of Gradient Descent on Separable Data},
author={Daniel Soudry and Elad Hoffer and Nathan Srebro},
booktitle={International Conference on Learning Representations},
year={2018},
url={https://openreview.net/forum?id=r1q7n9gAb},
}

@inproceedings{gunasekar2018implicit,
 author = {Gunasekar, Suriya and Lee, Jason D and Soudry, Daniel and Srebro, Nati},
 booktitle = {Advances in Neural Information Processing Systems},
 editor = {S. Bengio and H. Wallach and H. Larochelle and K. Grauman and N. Cesa-Bianchi and R. Garnett},
 pages = {},
 publisher = {Curran Associates, Inc.},
 title = {Implicit Bias of Gradient Descent on Linear Convolutional Networks},
 url = {https://proceedings.neurips.cc/paper\_files/paper/2018/file/0e98aeeb54acf612b9eb4e48a269814c-Paper.pdf},
 volume = {31},
 year = {2018}
}

@misc{neyshabur2017implicit,
  doi = {10.48550/ARXIV.1709.01953},
  url = {https://arxiv.org/abs/1709.01953},
  author = {Neyshabur,  Behnam},
  title = {Implicit Regularization in Deep Learning},
  publisher = {arXiv},
  year = {2017},
  copyright = {arXiv.org perpetual,  non-exclusive license}
}

@article{Xu2023,
  title = {Small data machine learning in materials science},
  volume = {9},
  ISSN = {2057-3960},
  url = {http://dx.doi.org/10.1038/s41524-023-01000-z},
  DOI = {10.1038/s41524-023-01000-z},
  number = {1},
  journal = {npj Computational Materials},
  publisher = {Springer Science and Business Media LLC},
  author = {Xu,  Pengcheng and Ji,  Xiaobo and Li,  Minjie and Lu,  Wencong},
  year = {2023},
  month = Mar 
}

@article{Dou2023,
  title = {Machine Learning Methods for Small Data Challenges in Molecular Science},
  volume = {123},
  ISSN = {1520-6890},
  url = {http://dx.doi.org/10.1021/acs.chemrev.3c00189},
  DOI = {10.1021/acs.chemrev.3c00189},
  number = {13},
  journal = {Chemical Reviews},
  publisher = {American Chemical Society (ACS)},
  author = {Dou,  Bozheng and Zhu,  Zailiang and Merkurjev,  Ekaterina and Ke,  Lu and Chen,  Long and Jiang,  Jian and Zhu,  Yueying and Liu,  Jie and Zhang,  Bengong and Wei,  Guo-Wei},
  year = {2023},
  month = June,
  pages = {8736–8780}
}

@article{FeldnerBusztin2023,
  title = {Dealing with dimensionality: the application of machine learning to multi-omics data},
  volume = {39},
  ISSN = {1367-4811},
  url = {http://dx.doi.org/10.1093/bioinformatics/btad021},
  DOI = {10.1093/bioinformatics/btad021},
  number = {2},
  journal = {Bioinformatics},
  publisher = {Oxford University Press (OUP)},
  author = {Feldner-Busztin,  Dylan and Firbas Nisantzis,  Panos and Edmunds,  Shelley Jane and Boza,  Gergely and Racimo,  Fernando and Gopalakrishnan,  Shyam and Limborg,  Morten Tønsberg and Lahti,  Leo and de Polavieja,  Gonzalo G},
  editor = {Wren,  Jonathan},
  year = {2023},
  month = Jan 
}

@article{Rajkomar2018,
  title = {Scalable and accurate deep learning with electronic health records},
  volume = {1},
  ISSN = {2398-6352},
  url = {http://dx.doi.org/10.1038/s41746-018-0029-1},
  DOI = {10.1038/s41746-018-0029-1},
  number = {1},
  journal = {npj Digital Medicine},
  publisher = {Springer Science and Business Media LLC},
  author = {Rajkomar,  Alvin and Oren,  Eyal and Chen,  Kai and Dai,  Andrew M. and Hajaj,  Nissan and Hardt,  Michaela and Liu,  Peter J. and Liu,  Xiaobing and Marcus,  Jake and Sun,  Mimi and Sundberg,  Patrik and Yee,  Hector and Zhang,  Kun and Zhang,  Yi and Flores,  Gerardo and Duggan,  Gavin E. and Irvine,  Jamie and Le,  Quoc and Litsch,  Kurt and Mossin,  Alexander and Tansuwan,  Justin and Wang,  De and Wexler,  James and Wilson,  Jimbo and Ludwig,  Dana and Volchenboum,  Samuel L. and Chou,  Katherine and Pearson,  Michael and Madabushi,  Srinivasan and Shah,  Nigam H. and Butte,  Atul J. and Howell,  Michael D. and Cui,  Claire and Corrado,  Greg S. and Dean,  Jeffrey},
  year = {2018},
  month = May 
}

@article{CHINCO2018,
  title = {Sparse Signals in the Cross‐Section of Returns},
  volume = {74},
  ISSN = {1540-6261},
  url = {http://dx.doi.org/10.1111/jofi.12733},
  DOI = {10.1111/jofi.12733},
  number = {1},
  journal = {The Journal of Finance},
  publisher = {Wiley},
  author = {Chinco,  Alex and Clark‐Joseph,  Adam D. and Ye,  Mao},
  year = {2018},
  month = Nov,
  pages = {449–492}
}

@inproceedings{
zhang2017understanding,
title={Understanding deep learning requires rethinking generalization},
author={Chiyuan Zhang and Samy Bengio and Moritz Hardt and Benjamin Recht and Oriol Vinyals},
booktitle={International Conference on Learning Representations},
year={2017},
url={https://openreview.net/forum?id=Sy8gdB9xx}
}

@inproceedings{Garipov2018,
 author = {Garipov, Timur and Izmailov, Pavel and Podoprikhin, Dmitrii and Vetrov, Dmitry and Wilson, Andrew G},
 booktitle = {Advances in Neural Information Processing Systems},
 editor = {S. Bengio and H. Wallach and H. Larochelle and K. Grauman and N. Cesa-Bianchi and R. Garnett},
 pages = {},
 publisher = {Curran Associates, Inc.},
 title = {Loss Surfaces, Mode Connectivity, and Fast Ensembling of DNNs},
 url = {https://proceedings.neurips.cc/paper\_files/paper/2018/file/be3087e74e9100d4bc4c6268cdbe8456-Paper.pdf},
 volume = {31},
 year = {2018}
}

@article{Soudry2018,
  author  = {Daniel Soudry and Elad Hoffer and Mor Shpigel Nacson and Suriya Gunasekar and Nathan Srebro},
  title   = {The Implicit Bias of Gradient Descent on Separable Data},
  journal = {Journal of Machine Learning Research},
  year    = {2018},
  volume  = {19},
  number  = {70},
  pages   = {1--57},
  url     = {http://jmlr.org/papers/v19/18-188.html}
}

@book{Rasmussen2005,
  title = {Gaussian Processes for Machine Learning},
  ISBN = {9780262256834},
  url = {http://dx.doi.org/10.7551/mitpress/3206.001.0001},
  DOI = {10.7551/mitpress/3206.001.0001},
  publisher = {The MIT Press},
  author = {Rasmussen,  Carl Edward and Williams,  Christopher K. I.},
  year = {2005},
  month = Nov 
}

@article{Belkin2019,
  title = {Reconciling modern machine-learning practice and the classical bias–variance trade-off},
  volume = {116},
  ISSN = {1091-6490},
  url = {http://dx.doi.org/10.1073/pnas.1903070116},
  DOI = {10.1073/pnas.1903070116},
  number = {32},
  journal = {Proceedings of the National Academy of Sciences},
  publisher = {Proceedings of the National Academy of Sciences},
  author = {Belkin,  Mikhail and Hsu,  Daniel and Ma,  Siyuan and Mandal,  Soumik},
  year = {2019},
  month = July,
  pages = {15849–15854}
}

@article{Hochreiter1997,
  title = {Flat Minima},
  volume = {9},
  ISSN = {1530-888X},
  url = {http://dx.doi.org/10.1162/neco.1997.9.1.1},
  DOI = {10.1162/neco.1997.9.1.1},
  number = {1},
  journal = {Neural Computation},
  publisher = {MIT Press},
  author = {Hochreiter,  Sepp and Schmidhuber,  J\"{u}rgen},
  year = {1997},
  month = Jan,
  pages = {1–42}
}

@inproceedings{Krogh1991,
 author = {Krogh, Anders and Hertz, John},
 booktitle = {Advances in Neural Information Processing Systems},
 editor = {J. Moody and S. Hanson and R.P. Lippmann},
 pages = {},
 publisher = {Morgan-Kaufmann},
 title = {A Simple Weight Decay Can Improve Generalization},
 url = {https://proceedings.neurips.cc/paper\_files/paper/1991/file/8eefcfdf5990e441f0fb6f3fad709e21-Paper.pdf},
 volume = {4},
 year = {1991}
}

@article{Polyak1964,
  title = {Some methods of speeding up the convergence of iteration methods},
  volume = {4},
  ISSN = {0041-5553},
  url = {http://dx.doi.org/10.1016/0041-5553(64)90137-5},
  DOI = {10.1016/0041-5553(64)90137-5},
  number = {5},
  journal = {USSR Computational Mathematics and Mathematical Physics},
  publisher = {Elsevier BV},
  author = {Polyak,  B.T.},
  year = {1964},
  month = Jan,
  pages = {1–17}
}

@article{Nesterov1983AMF,
  title={A method for solving the convex programming problem with convergence rate $O(1/k^2)$},
  author={Yurii Nesterov},
  journal={Proceedings of the USSR Academy of Sciences},
  year={1983},
  volume={269},
  pages={543-547},
  url={https://api.semanticscholar.org/CorpusID:145918791}
}

@InProceedings{pmlr-v28-sutskever13,
  title = 	 {On the importance of initialization and momentum in deep learning},
  author = 	 {Sutskever, Ilya and Martens, James and Dahl, George and Hinton, Geoffrey},
  booktitle = 	 {Proceedings of the 30th International Conference on Machine Learning},
  pages = 	 {1139--1147},
  year = 	 {2013},
  editor = 	 {Dasgupta, Sanjoy and McAllester, David},
  volume = 	 {28},
  number =       {3},
  series = 	 {Proceedings of Machine Learning Research},
  address = 	 {Atlanta, Georgia, USA},
  month = 	 {17--19 Jun},
  publisher =    {PMLR},
  url = 	 {https://proceedings.mlr.press/v28/sutskever13.html}
}

@inproceedings{kingma2015adam,
  author    = {Kingma, Diederik P. and Ba, Jimmy},
  title     = {Adam: A Method for Stochastic Optimization},
  booktitle = {International Conference on Learning Representations (ICLR)},
  year      = {2015},
  url       = {https://arxiv.org/abs/1412.6980}
}

@inproceedings{Loshchilov2017DecoupledWD,
  title={Decoupled Weight Decay Regularization},
  author={Ilya Loshchilov and Frank Hutter},
  booktitle={International Conference on Learning Representations},
  year={2017},
  url={https://api.semanticscholar.org/CorpusID:53592270}
}

@book{Ogata2001,
author = {Ogata, Katsuhiko},
title = {Modern Control Engineering},
year = {2001},
isbn = {0130609072},
publisher = {Prentice Hall PTR},
address = {USA},
edition = {4th}
}

@article{JMLR:v12:duchi11a,
  author  = {John Duchi and Elad Hazan and Yoram Singer},
  title   = {Adaptive Subgradient Methods for Online Learning and Stochastic Optimization},
  journal = {Journal of Machine Learning Research},
  year    = {2011},
  volume  = {12},
  number  = {61},
  pages   = {2121--2159},
  url     = {http://jmlr.org/papers/v12/duchi11a.html}
}

@article{JMLR:v15:srivastava14a,
  author  = {Nitish Srivastava and Geoffrey Hinton and Alex Krizhevsky and Ilya Sutskever and Ruslan Salakhutdinov},
  title   = {Dropout: A Simple Way to Prevent Neural Networks from Overfitting},
  journal = {Journal of Machine Learning Research},
  year    = {2014},
  volume  = {15},
  number  = {56},
  pages   = {1929--1958},
  url     = {http://jmlr.org/papers/v15/srivastava14a.html}
}

@inproceedings{Caruana2000,
 author = {Caruana, Rich and Lawrence, Steve and Giles, C.},
 booktitle = {Advances in Neural Information Processing Systems},
 editor = {T. Leen and T. Dietterich and V. Tresp},
 pages = {},
 publisher = {MIT Press},
 title = {Overfitting in Neural Nets: Backpropagation, Conjugate Gradient, and Early Stopping},
 url = {https://proceedings.neurips.cc/paper\_files/paper/2000/file/059fdcd96baeb75112f09fa1dcc740cc-Paper.pdf},
 volume = {13},
 year = {2000}
}

@article{Yao2007,
  title = {On Early Stopping in Gradient Descent Learning},
  volume = {26},
  ISSN = {1432-0940},
  url = {http://dx.doi.org/10.1007/s00365-006-0663-2},
  DOI = {10.1007/s00365-006-0663-2},
  number = {2},
  journal = {Constructive Approximation},
  publisher = {Springer Science and Business Media LLC},
  author = {Yao,  Yuan and Rosasco,  Lorenzo and Caponnetto,  Andrea},
  year = {2007},
  month = Apr,
  pages = {289–315}
}

@inproceedings{Dinh2017,
author = {Dinh, Laurent and Pascanu, Razvan and Bengio, Samy and Bengio, Yoshua},
title = {Sharp minima can generalize for deep nets},
year = {2017},
publisher = {JMLR.org},
booktitle = {Proceedings of the 34th International Conference on Machine Learning - Volume 70},
pages = {1019–1028},
numpages = {10},
location = {Sydney, NSW, Australia},
series = {ICML'17}
}

@article{cancer2013comprehensive,
  title={Comprehensive molecular characterization of clear cell renal cell carcinoma},
  author={TCGA Research Network},
  journal={Nature},
  volume={499},
  number={7456},
  pages={43--49},
  year={2013},
  publisher={Nature Publishing Group UK London}
}
\newpage
\appendix

\AppendixBanner
\section{Extended Related Work}

\paragraph{Optimizer families in deep learning.}
Deep-learning optimization is dominated by several families of first-order methods. Stochastic gradient descent (SGD) and minibatch SGD remain fundamental baselines because of their simplicity, scalability, and often strong generalization properties. Momentum variants, including heavy-ball momentum and Nesterov acceleration, add velocity terms that smooth stochastic gradients and accelerate convergence in ill-conditioned landscapes \cite{Polyak1964,Nesterov1983AMF}. Adaptive methods such as AdaGrad, RMSProp, Adam, and AdamW rescale updates using coordinate-wise gradient statistics, improving robustness across heterogeneous parameter scales \cite{JMLR:v12:duchi11a,kingma2015adam,Loshchilov2017DecoupledWD}. More recent optimizers such as Muon introduce approximate matrix- or spectral-normalized updates to improve conditioning in neural-network training \cite{tveit2025muon}. These optimizers differ in how they precondition, smooth, or normalize gradients, but they are generally designed as full-training optimizers rather than phase-specific methods specialized for the post-interpolation regime.

\paragraph{Weight decay and norm-based regularization.}
Weight decay is one of the oldest and most widely used regularization mechanisms in neural-network training. It biases optimization toward smaller parameter norms and can improve generalization by suppressing unnecessary degrees of freedom \cite{Krogh1991}. In modern deep learning, the distinction between coupled $L^2$ penalties and decoupled weight decay is important: in adaptive optimizers, adding an $L^2$ penalty to the loss is not equivalent to directly decaying the parameters, motivating methods such as AdamW \cite{Loshchilov2017DecoupledWD}. Norm-based regularization can be understood as imposing a preference over the set of functions or parameterizations compatible with the training data. In the present work, this role becomes especially explicit after interpolation, where the empirical loss no longer distinguishes between solutions and weight decay becomes the main driver of movement toward lower-norm parameters.

\paragraph{Non-weight-decay regularization strategies.}
Many regularization strategies do not directly penalize parameter norm. Dropout randomly removes units during training, discouraging co-adaptation and approximating an implicit ensemble of subnetworks \cite{JMLR:v15:srivastava14a}. Data augmentation expands the effective training distribution by applying label-preserving transformations, thereby constraining the learned function rather than the parameter vector. Early stopping regularizes by terminating training before the optimizer fully exploits high-variance directions \cite{Caruana2000,Yao2007}. Other methods, including label smoothing, mixup, stochastic depth, and noise injection, regularize through the targets, inputs, architecture, or training dynamics. These approaches can improve generalization without explicitly selecting a minimum-norm interpolator. They are therefore complementary to our focus on the post-interpolation behavior induced by weight decay.

\paragraph{Norm-based solution selection.}
Norm-based solution selection is analytically clearest in overparameterized linear regression, where all interpolating solutions can be written as a minimum-norm interpolator plus an arbitrary null-space component. In this setting, moving away from the minimum-norm solution can increase test error even though training error remains zero \cite{hastie2022surprises}. In neural networks, the same idea is more subtle because parameter norms are affected by architecture, scale symmetries, normalization layers, and reparameterizations. Nevertheless, norm remains an operationally meaningful quantity when the optimizer explicitly acts on it through weight decay. Our analysis therefore treats norm not as a universal invariant measure of function complexity, but as the concrete complexity proxy optimized during the second phase of training.

\paragraph{Flat minima and sharpness.}
The flat-minima hypothesis proposes that broad basins of low loss generalize better because their predictions are less sensitive to small parameter perturbations \cite{Hochreiter1997}. This idea has influenced work on sharpness, batch size, and generalization, but it also has important limitations: parameter-space sharpness can change under reparameterizations without changing the represented function \cite{Dinh2017}. Later approaches therefore distinguish between naive sharpness measures and more invariant or normalized notions of flatness. In our setting, flatness is used in a restricted local sense. Flat directions of the Hessian identify where training loss changes weakly after interpolation; weight decay then determines which direction the optimizer moves within this locally flat region.

\paragraph{Mode connectivity and interpolation manifolds.}
Empirical studies of neural-network loss landscapes show that independently trained solutions can often be connected by low-loss paths or higher-dimensional regions of low loss \cite{Garipov2018,draxler2018essentially,benton2021loss}. These results suggest that overparameterized networks do not converge to isolated minima, but to extended sets of functionally similar solutions. This supports a manifold view of interpolation: after reaching zero or near-zero training loss, the model may still move substantially in parameter space while preserving training performance. Our local Hessian-nullspace approximation can be interpreted as a tangent-space approximation to such an interpolation manifold.

\section{Extended experimental results}

\subsection{Training run example}

We provide in Figure~\ref{fig:qm9_training_example} an example training trajectory on the QM9 dataset. \texttt{GROKtimizer} reaches a substantially lower test loss than the baseline optimizers, with the largest improvement arising during the second phase of the schedule, when the optimizer explicitly enters the norm-minimization regime. This behavior is consistent with our central hypothesis: once interpolation has been achieved, continued optimization can still improve the learned solution by moving toward a lower-complexity interpolating predictor. Notably, even when trained for substantially more epochs, the baseline optimizers do not appear to enter the same post-interpolation regime and fail to match the final performance of \texttt{GROKtimizer}.

\begin{figure}[h]
    \centering
    \includegraphics[width=1.0\linewidth]{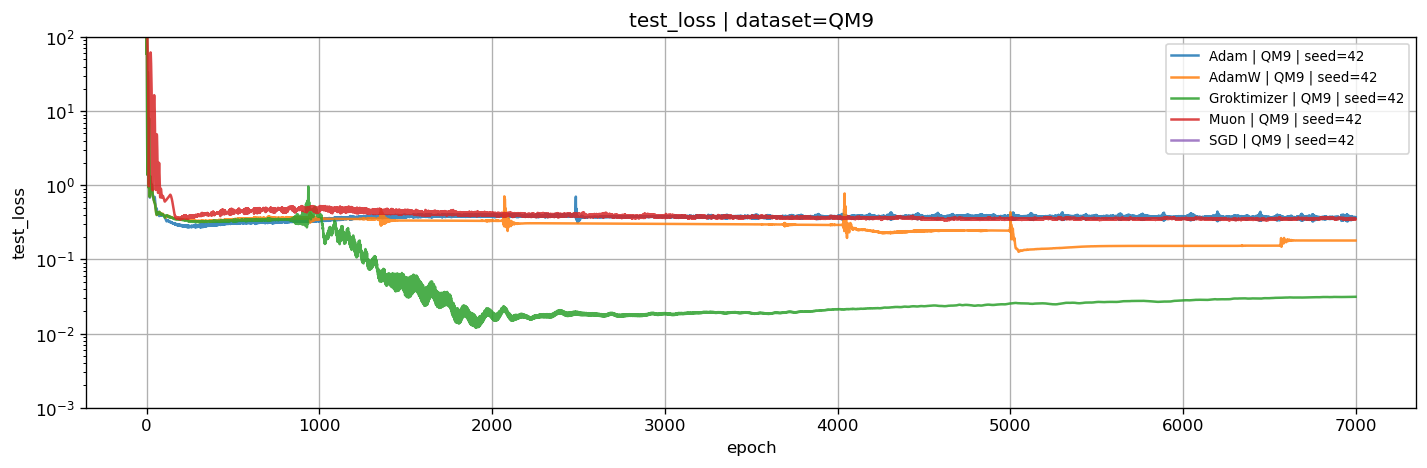}
    \caption{\textbf{Example training run on QM9.} Training dynamics of \texttt{GROKtimizer} and baseline optimizers on the QM9 dataset.}
    \label{fig:qm9_training_example}
\end{figure}

\subsection{Detailed dynamics on the synthetic Gaussian benchmark}

We further analyze the \texttt{GROKtimizer} trajectory on the synthetic Gaussian benchmark, where the relationship between interpolation, parameter norm, and generalization can be inspected directly. Figure~\ref{fig:gaussian_dynamics} shows that \texttt{GROKtimizer} follows a similar initial training-loss decay to the baseline optimizers, indicating that the first phase preserves efficient fitting dynamics. The difference emerges after interpolation: \texttt{GROKtimizer} enters a rapid norm-minimization regime, reducing the parameter norm substantially faster than the baselines. This reduction is accompanied by improved test loss, supporting the interpretation that the second phase guides the model toward a lower-complexity interpolating solution with better generalization.

\begin{figure}[htbp]
    \centering

    \begin{subfigure}{0.32\textwidth}
        \centering
        \includegraphics[width=\linewidth]{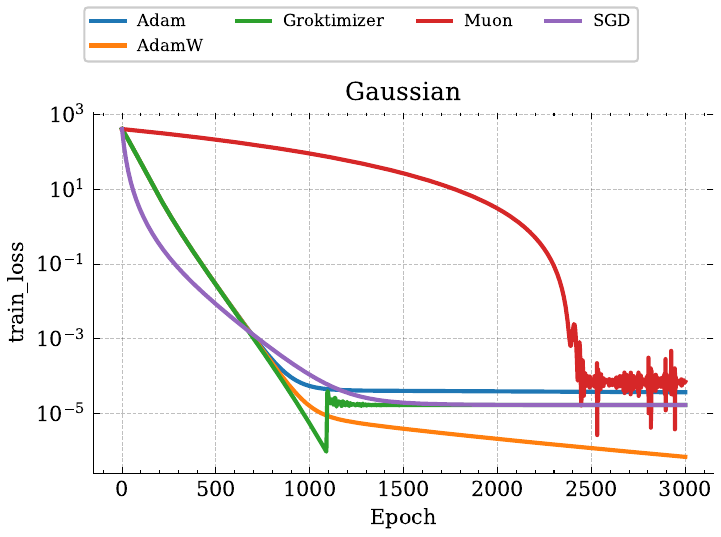}
        \caption{Train loss.}
        \label{fig:gaussian_train_loss}
    \end{subfigure}
    \hfill
    \begin{subfigure}{0.32\textwidth}
        \centering
        \includegraphics[width=\linewidth]{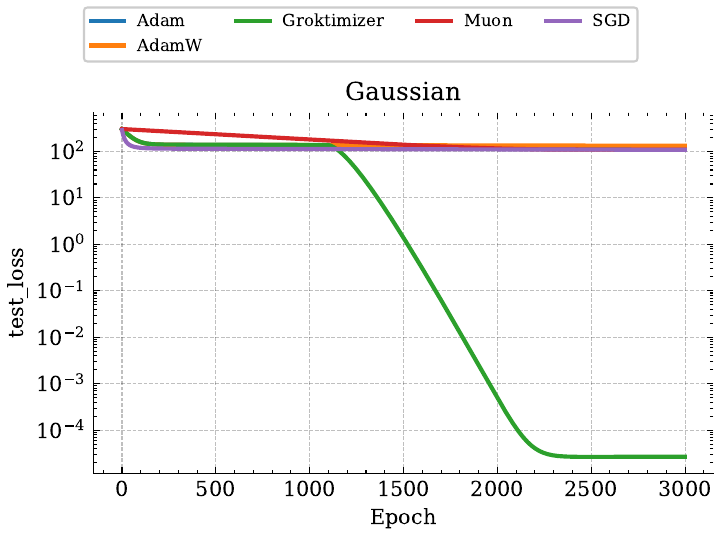}
        \caption{Test loss.}
        \label{fig:gaussian_test_loss}
    \end{subfigure}
    \hfill
    \begin{subfigure}{0.32\textwidth}
        \centering
        \includegraphics[width=\linewidth]{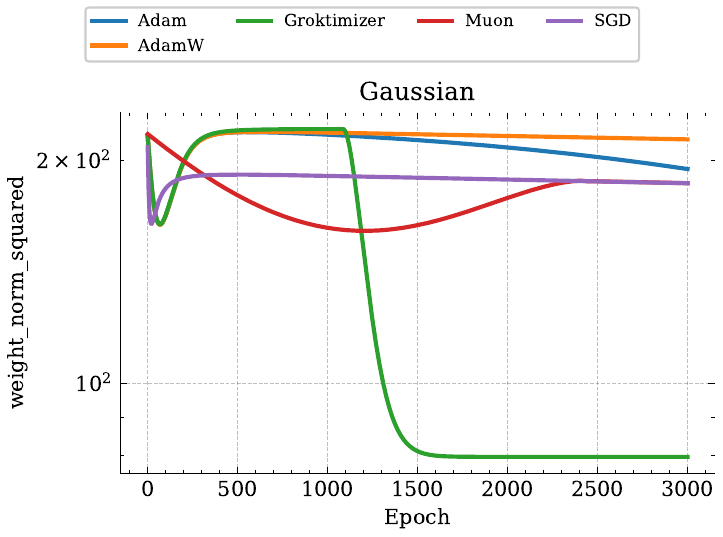}
        \caption{Weight norm.}
        \label{fig:gaussian_weight_norm}
    \end{subfigure}

    \caption{\textbf{Detailed dynamics on the synthetic Gaussian benchmark.}
    \texttt{GROKtimizer} initially follows a training-loss trajectory comparable to the baseline optimizers, but after interpolation it rapidly reduces parameter norm. This post-interpolation norm reduction is accompanied by lower test loss, consistent with the proposed complexity-minimization mechanism.}
    \label{fig:gaussian_dynamics}
\end{figure}

\subsection{Learning rate comparison}

A possible concern is that the acceleration achieved by \texttt{GROKtimizer} is simply a consequence of being able to use a larger learning rate in the second phase. To test this, we compare \texttt{GROKtimizer} against Adam using the same elevated post-interpolation learning rate. As shown in Figure~\ref{fig:phase2lr_comparison}, Adam becomes highly unstable under this setting and fails to achieve comparable performance. This indicates that the gains of \texttt{GROKtimizer} do not arise from a larger learning rate alone, but from the combination of entering the post-interpolation regime, exploiting the flatter loss landscape, and using an appropriate momentum term during the norm-minimization phase. We show this effect on both Binary Addition and Modular Addition.

\begin{figure}[!htb]
    \centering
    \begin{subfigure}{0.49\linewidth}
        \centering
        \includegraphics[width=\linewidth]{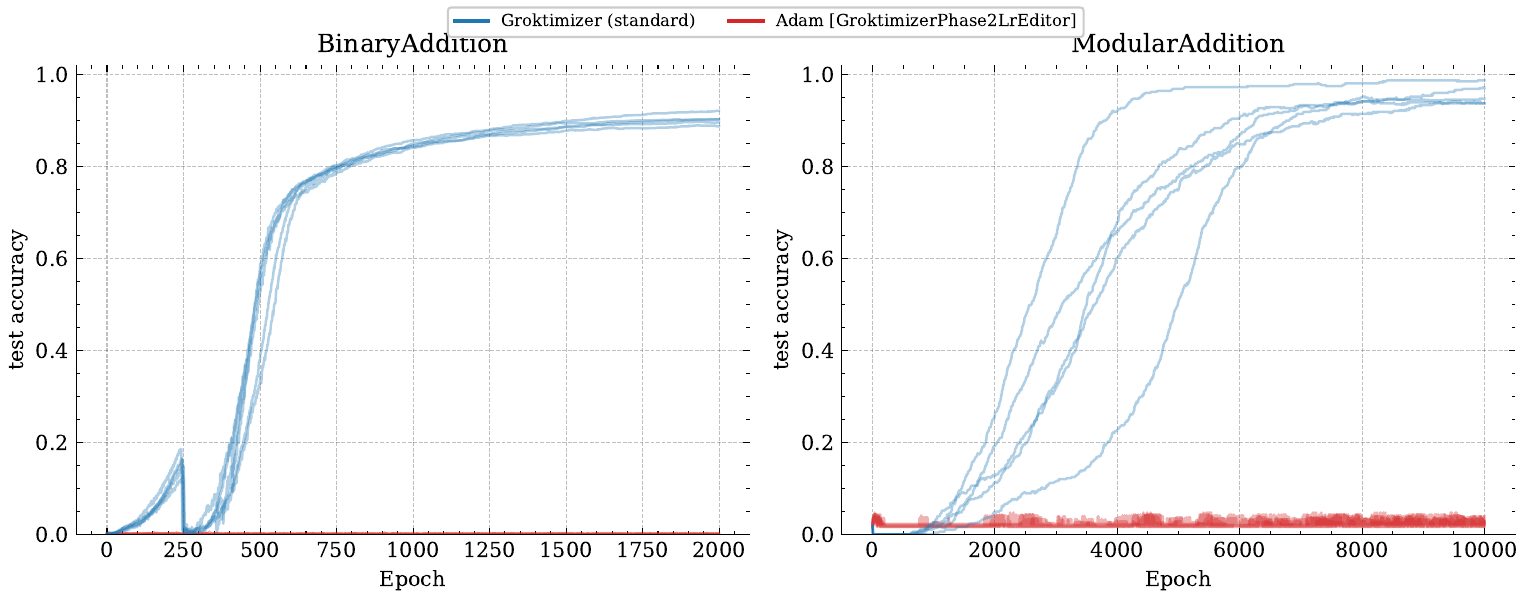}
        \caption{Test accuracy.}
        \label{fig:phase2lr_acc}
    \end{subfigure}
    \hfill
    \begin{subfigure}{0.49\linewidth}
        \centering
        \includegraphics[width=\linewidth]{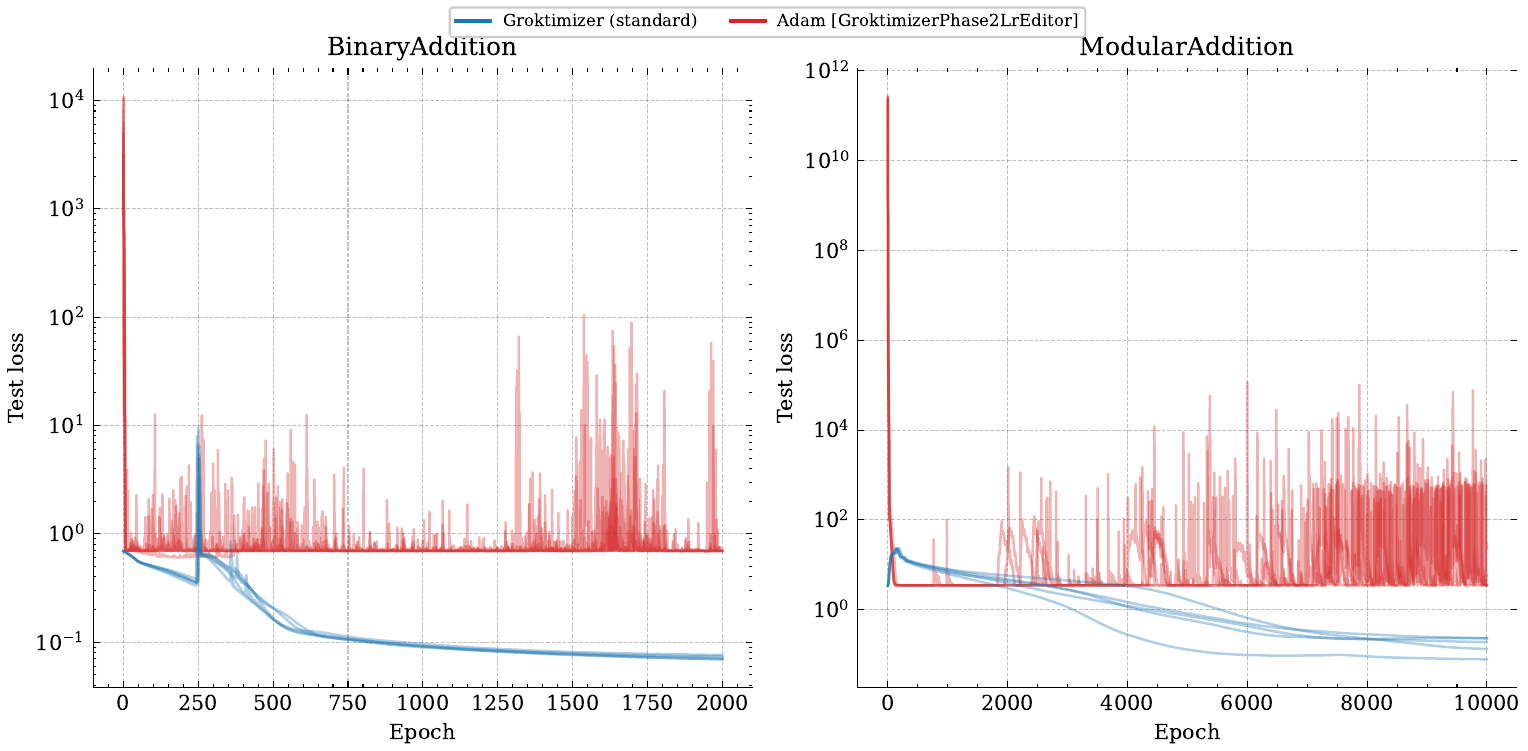}
        \caption{Test loss.}
        \label{fig:phase2lr_loss}
    \end{subfigure}
    \caption{\textbf{Learning-rate comparison after interpolation.} Performance on Binary Addition and Modular Addition when using an increased post-interpolation learning rate. While \texttt{GROKtimizer} remains stable and benefits from the second-phase schedule, Adam becomes unstable when trained with the same elevated learning rate. This shows that the observed speedup is not explained by learning rate alone, but by the phase-specific optimization dynamics of \texttt{GROKtimizer}.}
    \label{fig:phase2lr_comparison}
\end{figure}

\subsection{Examining model behavior during training}

To better understand the dynamics induced by the second phase of \texttt{GROKtimizer}, we examine how the model representation evolves during post-interpolation optimization. We use the neural tangent kernel (NTK) as a representation proxy: for each checkpoint, we compute the NTK features associated with the first output logit on both train and test samples, and visualize the resulting representations after separating samples by class.

\begin{figure}[h]
    \centering
    \includegraphics[width=1.0\linewidth, trim={0cm 0cm 0cm 0cm}, clip]{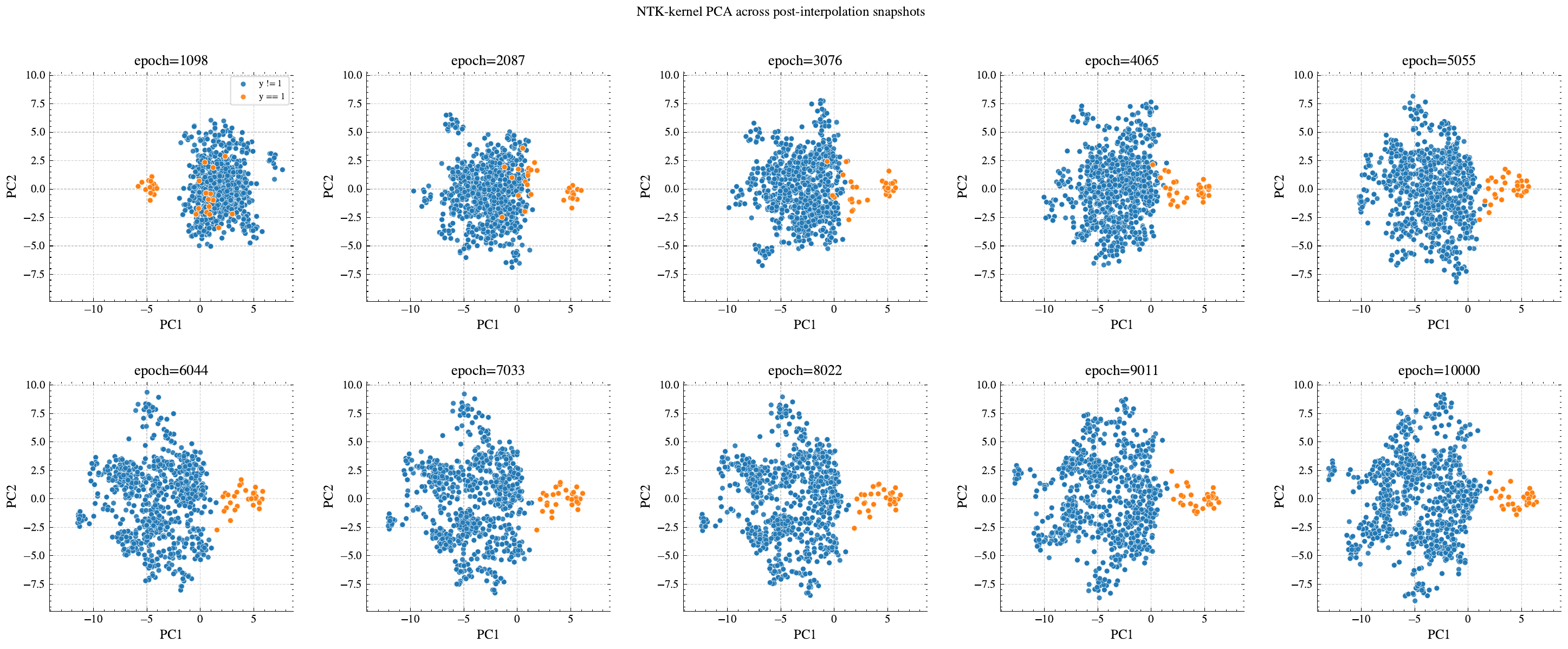}
    \caption{\textbf{Representation evolution during post-interpolation optimization.}
    We visualize NTK-based representations computed from the first output logit at different stages of the \texttt{GROKtimizer} trajectory. Immediately after interpolation, train and test samples from different classes remain partially overlapping, indicating that fitting the training data has not yet produced a representation that separates the test distribution. During the norm-minimization phase, the two classes progressively separate, and the representation develops a richer multi-cluster geometry. By the end of optimization, the classes are clearly separated, suggesting that post-interpolation dynamics continue to reorganize the learned representation even after the training set has been fit.}
    \label{fig:ntk_representation_dynamics}
\end{figure}

Figure~\ref{fig:ntk_representation_dynamics} shows that interpolation does not coincide with the end of meaningful representation learning. Immediately after the interpolation threshold, samples from different classes remain substantially mixed in the NTK representation space, especially among test points. This indicates that although the model has fit the training labels, the induced representation has not yet organized the input space in a way that supports robust generalization.

As the second phase progresses, the class-conditional point clouds separate more clearly. The geometry also becomes more structured: rather than remaining approximately isotropic, the representation develops multiple coherent clusters, suggesting that the optimizer continues to refine the internal organization of the data during complexity minimization. By the end of training, the two classes are well separated in the NTK representation, consistent with the observed improvement in test performance. These dynamics support the view that \texttt{GROKtimizer} does not merely reduce parameter norm after interpolation, but can also induce a qualitative reorganization of the learned representation in the post-interpolation regime.

\newpage

\subsection{WeightWatcher analysis}
\begin{wrapfigure}{r}{0.50\textwidth}
\vspace{-10pt}
    \centering
    \includegraphics[width=\linewidth]{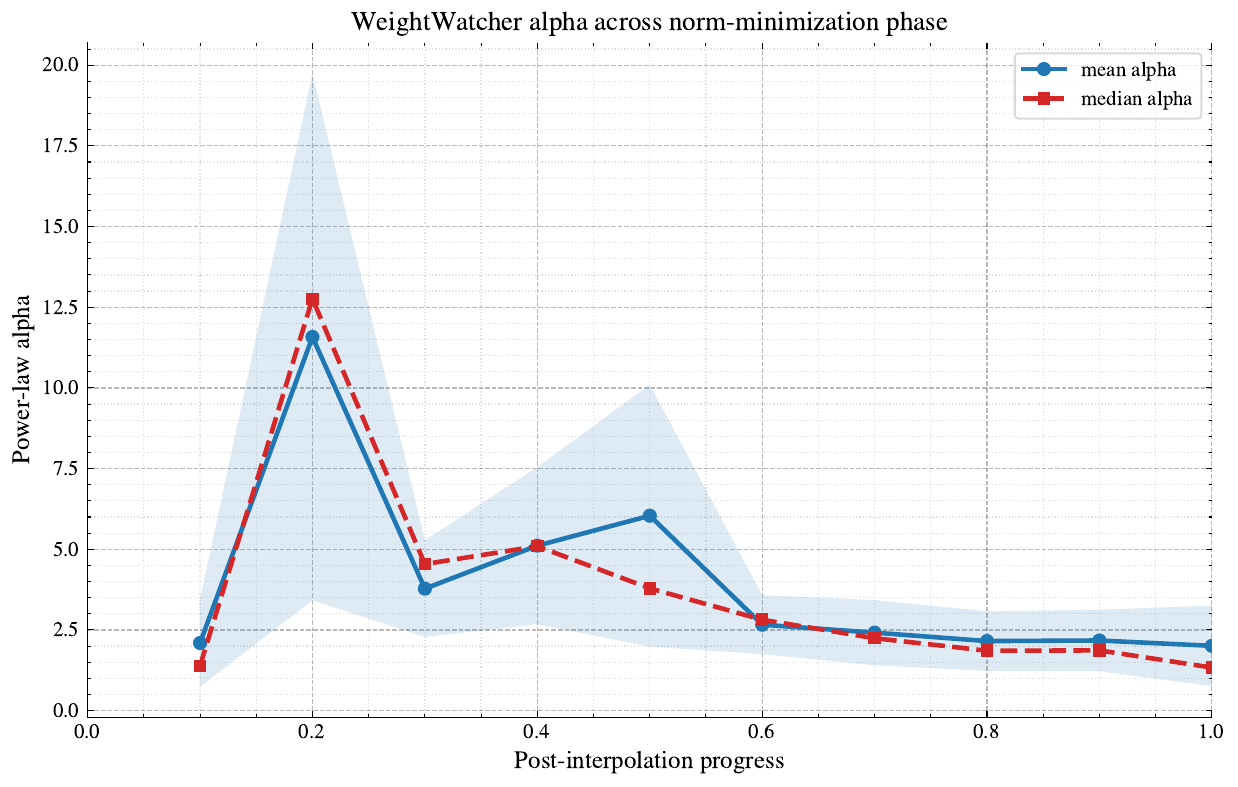}
    \caption{\textbf{WeightWatcher analysis during post-interpolation optimization.}
    Evolution of the fitted \texttt{WeightWatcher} exponent $\alpha$ on Modular Addition. During the complexity-minimization phase of \texttt{GROKtimizer}, $\alpha$ moves toward $2$, consistent with the emergence of a more strongly regularized spectral structure and with the \texttt{SETOL}
    \cite{martin2025setol} theory of generalization quality.}
    \label{fig:weightwatcher_modular}
\vspace{-10pt}
\end{wrapfigure}

We further examine the post-interpolation dynamics using the \texttt{WeightWatcher} framework, which characterizes the spectral structure of neural-network weight matrices through heavy-tailed power-law fits. In particular, we track the evolution of the fitted exponent $\alpha$ during training on Modular Addition. Prior work has associated smaller values of $\alpha$, and in particular values approaching $\alpha \approx 2$, with stronger implicit regularization and improved generalization quality~\cite{prakash2025grokking}.

Figure~\ref{fig:weightwatcher_modular} shows that the spectral structure of the model continues to change after interpolation. During the second phase of \texttt{GROKtimizer}, the fitted exponent $\alpha$ decreases toward $2$, suggesting that norm minimization is accompanied by a qualitative change in the weight spectra. This provides an additional diagnostic of the post-interpolation regime: the model is not only moving to lower parameter norm, but also toward a spectral profile previously associated with better generalization in grokking settings. We interpret this result as complementary evidence that the second phase of \texttt{GROKtimizer} induces a structured complexity-reduction process rather than merely continuing standard loss minimization.

\subsection{MNIST sample efficiency}

\begin{wrapfigure}{r}{0.50\textwidth}
\vspace{-10pt}
    \centering
    \includegraphics[width=\linewidth]{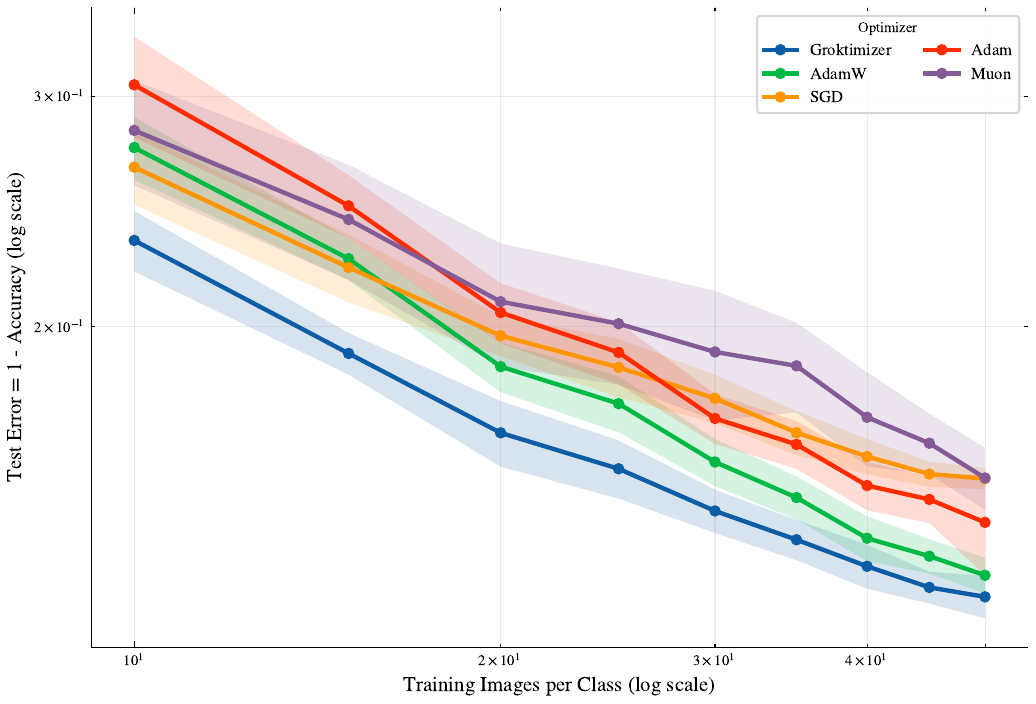}
    \caption{\textbf{MNIST sample-efficiency scaling.}
    Log--log plot of test error, defined as $1-\text{test accuracy}$, against the number of training examples.}
    \label{fig:mnist_loglog_error}
\vspace{-20pt}
\end{wrapfigure}

We additionally analyze sample efficiency on MNIST using a log--log scaling plot of test error, defined as $1-\text{test accuracy}$, against the number of training examples. This view separates two effects: the slope captures how quickly error decreases as additional data are added, while the vertical offset captures the absolute efficiency of each optimizer at a fixed sample size.

Figure~\ref{fig:mnist_loglog_error} shows that all optimizers exhibit broadly similar slopes, indicating comparable scaling behavior as the number of training examples increases. The main difference is therefore not the rate at which performance improves with more data, but the offset of the scaling curve. \texttt{GROKtimizer} achieves a consistently lower test-error offset across the full range of training-set sizes, meaning that it reaches lower error for the same number of examples. This is consistent with the interpretation that the biphasic schedule improves the selected interpolating solution rather than changing the fundamental data-scaling exponent.

\section{Extended Theoretical consideration} 

\subsection{Detailed proof of Lemma 1}

We now provide the proof of Lemma 1, which offers a simplified model of the importance of norm minimization for interpolating solutions of regression problems.

\textbf{Full statement of  Lemma 1}

Let $\mathbf X \in \mathbb R^{n \times p}$ be a Gaussian random matrix (with $p>n$), whose rows are sampled from an isotropic normal distribution $\boldsymbol{x}_i \sim \mathcal N(\mathbf 0,\mathbf I) \ \forall i$ . Let $\boldsymbol{w}_* \in \mathbb R^p$ be an unknown teacher vector and let  $\boldsymbol{y} =\mathbf X \boldsymbol{w}_* + \sigma \boldsymbol{\varepsilon}$ be a target vector, with $\sigma > 0$ and $\boldsymbol{\varepsilon} \sim \mathcal N(\mathbf 0, \mathbf I)$. Define $\hat {\boldsymbol{w}}_* = \mathbf X^\top (\mathbf X \mathbf X^\top)^{-1} \boldsymbol{y}$ as the \emph{minimum norm interpolating solution} and define the family of interpolating solutions
\begin{equation}
    \mathcal W_{\mathbf X,\boldsymbol{y}}
    = \left \{ \hat {\boldsymbol{w}}_* + (\mathbf I - 
    \mathbf X^\top (\mathbf X \mathbf X^\top)^{-1} \mathbf X
    ) \boldsymbol{z} \right\}_{\boldsymbol{z} \in \mathbb R^p}
\end{equation}
for $\boldsymbol{z} \in \mathbb R^p$. Define thus $\widehat {\boldsymbol{w}}$ as a random vector in $\mathcal W$, characterized by a perturbation $\boldsymbol{z} \sim \mathcal U(\mathcal S_r(0))$ where $r > 0$ represents the fixed norm of the random variable $\boldsymbol{z}$ in parameter space. Under the limit $n,p \to \infty$ with $n/p = \gamma \in (0,1)$ the expected generalization error has the following asymptotic behavior
\begin{equation}
    \mathbb E_{\boldsymbol{z},\boldsymbol{x},\mathbf X,\boldsymbol{\varepsilon} }
    (\boldsymbol{x}^\top \widehat{\boldsymbol{w}} - \boldsymbol{x}^\top \boldsymbol{w}_* )^2
    \sim (1 - \gamma) \| \boldsymbol{w}_*\|^2 + (1 - \gamma) r^2
    + \sigma^2 \frac{\gamma}{1 - \gamma}
\end{equation}
and since $ \mathbb E[\|\widehat {\boldsymbol w}_* - \widehat {\boldsymbol{w}}\|^2] = (1 - \gamma)r^2 $ the expected generalization error follows an asymptotic quadratic scaling with the distance from the minimum norm solution.
\\[1em]
\noindent
\emph{Proof}
\begin{align*}
     & 
     \mathbb E_{\boldsymbol{z}}
     \mathbb E_{\boldsymbol{x},\boldsymbol \varepsilon, \mathbf X} (\boldsymbol{x}^\top \widehat{\boldsymbol{w}} - \boldsymbol{x}^\top \boldsymbol{w}_* )^2
     \\ 
     & =
       \mathbb E_{\boldsymbol{z}}
     \mathbb E_{\boldsymbol \varepsilon, \mathbf X}
     \left[   (\widehat{\boldsymbol{w}}- \boldsymbol{w}_* )^\top \mathbb E_{\boldsymbol{x}}  ( \boldsymbol{x}\boldsymbol{x}^\top ) 
     (\widehat{\boldsymbol{w}} - \boldsymbol{w}_*)
     \right]
     \\
     & =
       \mathbb E_{\boldsymbol{z}}
     \mathbb E_{\boldsymbol \varepsilon, \mathbf X} 
     \| \widehat{\boldsymbol{w}} - \boldsymbol{w}_* \|^2_2
     \intertext{
     Since the minimum norm solution is defined by
     $\boldsymbol{w} = \lim_{\lambda \to 0}
    ( \mathbf X^\top \mathbf X + \lambda \mathbf I)^{-1} \mathbf X^\top \boldsymbol{y}
     = \mathbf X^\top (\mathbf X \mathbf X^\top)^{-1} \boldsymbol{y}
     $ we can rewrite $\hat {\boldsymbol{w}}$ explicitly
     } 
     & = 
       \mathbb E_{\boldsymbol{z}}
      \mathbb E_{\boldsymbol \varepsilon, \mathbf X} 
     \|
      \mathbf X^\top (\mathbf X \mathbf X^\top)^{-1} \boldsymbol{y}
       + (\mathbf I - 
    \mathbf X^\top (\mathbf X \mathbf X^\top)^{-1} \mathbf X
    ) \boldsymbol{z} - \boldsymbol{w}_*\|^2_2
     \\
     & = 
       \mathbb E_{\boldsymbol{z}}
      \mathbb E_{\boldsymbol \varepsilon, \mathbf X} 
     \|
      \mathbf X^\top (\mathbf X \mathbf X^\top)^{-1} (\mathbf X \boldsymbol{w}_* + \sigma \boldsymbol{\varepsilon})
      + (\mathbf I - 
    \mathbf X^\top (\mathbf X \mathbf X^\top)^{-1} \mathbf X
    ) \boldsymbol{z} - \boldsymbol{w}_*\|^2_2
     \intertext{
     Applying SVD decomposition $\mathbf X = \mathbf U \mathbf \Sigma \mathbf V^\top$ we rewrite
     }
     & = 
       \mathbb E_{\boldsymbol{z}}
      \mathbb E_{\boldsymbol \varepsilon, \mathbf X} 
     \|
      \mathbf V \mathbf V^\top \boldsymbol{w}_* + \sigma  \mathbf V \mathbf \Sigma^{-1} \mathbf U^\top \boldsymbol{\varepsilon}
      + (\mathbf I - \mathbf V \mathbf V^\top)\boldsymbol{z} - \boldsymbol{w}_*\|^2_2
     \\
     & = 
       \mathbb E_{\boldsymbol{z}}
      \mathbb E_{\boldsymbol \varepsilon, \mathbf X} 
     \|
      (\mathbf V \mathbf V^\top - \mathbf I) (\boldsymbol{w}_* - \boldsymbol{z}) + \sigma  \mathbf V \mathbf \Sigma^{-1} \mathbf U^\top \boldsymbol{\varepsilon}
      \|^2_2
     \\
     & = 
       \mathbb E_{\boldsymbol{z}}
      \mathbb E_{ \mathbf X} 
     \|
      (\mathbf V \mathbf V^\top - \mathbf I) (\boldsymbol{w}_* - \boldsymbol{z}) 
      \|^2_2
      +
     \sigma ^2
      \mathbb E_{\boldsymbol \varepsilon, \mathbf X} 
      \|
      \mathbf V \mathbf \Sigma^{-1} \mathbf U^\top \boldsymbol{\varepsilon}
      \|^2_2 & \text{since $\mathbb E[\boldsymbol{\varepsilon}] = \mathbf 0 $}
    \\
    & = 
      \mathbb E_{\boldsymbol{z}}
    \mathbb E_{\mathbf X}
    \sum_{i : \lambda_i = 0} (\boldsymbol{v}_i^\top (\boldsymbol{w}_* - \boldsymbol{z}) )^2
    + \sigma^2 \mathbb E_{\mathbf X} 
     \mathsf{Tr}
        \Big[ 
    \mathbf U \mathbf \Sigma^{-2} \mathbf U^\top
    \underbrace{ \mathbb E_{\boldsymbol{\varepsilon}}
[\boldsymbol{\varepsilon}\boldsymbol{\varepsilon}^\top]}_{\mathbf I}\Big]
\intertext{
Since the row space of a Gaussian random matrix is rotationally invariant, the nullspace projector acts isotropically in expectation, enabling us in using the uniform approximation of eigenvectors. Furthermore, noting that
$$
\mathsf{Tr}
        \left[ 
    \mathbf U \mathbf \Sigma^{-2} \mathbf U^\top\right] 
=
\frac{n}{p}
\frac{1}{n}
\mathsf{Tr}
        \left[ \left( \frac{1}{p} \mathbf X \mathbf X^\top \right)^{-1} \right] 
\to \frac{n}{p} \mathbb E_{s \sim \mathsf{MP}(\gamma)}[ s^{-1}] = \frac{n}{p} \frac{1}{1 - \gamma} = \frac{ \gamma }{1 - \gamma}
$$
we can apply the definition of first negative moment of the Marchenko Pastur distribution and obtain
}
& = \frac{p - n}{p}   \mathbb E_{\boldsymbol{z}} \| \boldsymbol{w}_* -\boldsymbol{z} \|^2  + \sigma^2 \frac{\gamma}{1 - \gamma}
\\
& = (1 - \gamma)   \mathbb E_{\boldsymbol{z}} \| \boldsymbol{w}_* -\boldsymbol{z} \|^2  + \sigma^2 \frac{\gamma}{1 - \gamma}
\intertext{Finally, taking the expectation with respect $\boldsymbol{z}$ we obtain}
& = (1 - \gamma) \| \boldsymbol{w}_*\|^2 + (1 - \gamma) r^2 + \sigma^2\frac{\gamma}{1 - \gamma}
\intertext{
and noting that
\begin{equation}
    \mathbb E \| \widehat{\boldsymbol{w}}
     - \widehat{\boldsymbol{w}}_* \|^2 
     = 
     \mathbb E
     \left[
     \|
     (\mathbf I - 
    \mathbf X^\top (\mathbf X \mathbf X^\top)^{-1} \mathbf X
    ) \boldsymbol{z}
     \|^2
     \right]
     =  \frac{r^2}{p} \left(
     p - p \gamma 
     \right)
     = r^2(1 - \gamma)
\end{equation}
we obtain the final formula
}
   & =  \mathbb E_{\boldsymbol{z},\boldsymbol{x},\mathbf X,\boldsymbol{\varepsilon} }
    (\boldsymbol{x}^\top \widehat{\boldsymbol{w}} - \boldsymbol{x}^\top \boldsymbol{w}_* )^2
    = 
    (1 - \gamma)\|\boldsymbol{w}_*\|^2
    +
    \mathbb 
    E_{\boldsymbol{z},\boldsymbol{x},\mathbf X,\boldsymbol{\varepsilon} }\| \widehat{\boldsymbol{w}}
     - \widehat{\boldsymbol{w}}_* \|^2 
    + \sigma^2 \frac{\gamma}{1 - \gamma}
\end{align*}
\subsection{Detailed proof of Theorem 2}

It is known that for a momentum based optimizer with momentum rate $\beta$ on a loss function of Hessian such that its largest eigenvalue equals $L$,  the learning rate has to belong to the positive interval

$$
\eta \in \left( 0 , \frac{2(1 + \beta)}{L} \right)
$$

Under the approximation of local convexity of the regularized loss landscape
$$
f(\boldsymbol{w}) = \frac{1}{2}(\boldsymbol{w} -\boldsymbol{w}_\star)^\top \mathbf H (\boldsymbol{w} - \boldsymbol{w}_\star) +  \frac{1}{2} \lambda \| \boldsymbol{w} \|^2
$$
the Hessian of the function is
\begin{equation}
    \mathcal H = \mathbf H + \lambda \mathbf I
\end{equation}
and thus
\begin{equation}
    L = \lambda_{\max} + \lambda
\end{equation}
we thus obtain
\begin{equation}
    \eta_{\max} = \frac{2(1 + \beta)}{\lambda_{\max} + \lambda}
\end{equation}
given the condition of critical damping which is
\begin{equation}
    \beta = 1 - 2 \sqrt{ \eta \lambda}
\end{equation}
we obtain the nonlinear system
\begin{equation}
    \begin{cases}
    \eta_{\max} &= \frac{2(1 + \beta)}{\lambda_{\max} + \lambda}
    \\
        \beta & = 1 - 2 \sqrt{\eta_{\max} \lambda}
    \end{cases}
\end{equation}
which is solved, under the constraint $\eta_{\max} > 0$, for
\begin{equation}
    \eta_{\max} = 
    \frac{
4 \left( 
\sqrt{\lambda_{\max} + 2 \lambda} - \sqrt{\lambda}
\right)^2
    }{(\lambda_{\max} + \lambda)^2}
    \sim 4 \lambda_{\max}^{- 1}
\end{equation}
which represents the maximally admissable learning rate to guarantee convergence.

\subsection{The Critically Damped Regime}

We provide, now, the complete proof of the quadratic speed up of the Critically Damped Momentum optimizer.

We start by constructing the continuous limit of the momentum optimizer.
Consider a Heavy Ball iteration of the form
\begin{equation}
    \begin{cases}
        \boldsymbol{m}_{k + 1} = \beta \boldsymbol{m}_k - \eta \nabla \mathcal L(\boldsymbol{w}_k) \\
        \boldsymbol{w}_{k + 1} = \boldsymbol{w}_k + \boldsymbol{m}_{k + 1}
    \end{cases}.
\end{equation}
\noindent
Observe that
\begin{equation}
    \boldsymbol{m}_{k + 1} = \boldsymbol{w}_{k + 1} - \boldsymbol{w}_k
\end{equation}
therefore, we can rewrite the iteration as
\begin{equation}
        \boldsymbol{w}_{k + 1} - \boldsymbol{w}_k = \beta \left( 
        \boldsymbol{w}_{k} - \boldsymbol{w}_{k - 1}
        \right) - \eta \nabla \mathcal L(\boldsymbol{w}_k) 
\end{equation}
adding $\boldsymbol w_{k - 1} - \boldsymbol{w}_{k} $ to both terms we obtain
\begin{equation}
        \boldsymbol{w}_{k + 1} - 2 \boldsymbol{w}_k  + \boldsymbol{w}_{k-1}= ( \beta - 1) \left( 
        \boldsymbol{w}_{k} - \boldsymbol{w}_{k - 1}
        \right) - \eta \nabla \mathcal L(\boldsymbol{w}_k) 
\end{equation}
assume now that
\begin{equation}
\begin{cases}
    1 - \beta  := \gamma := \gamma^u \sqrt{ \eta }
\end{cases}
\end{equation}
we can rewrite
\begin{equation}
    \boldsymbol{w}_{k + 1} - 2 \boldsymbol{w}_k  + \boldsymbol{w}_{k-1}= - \sqrt \eta \gamma^u \left( 
        \boldsymbol{w}_{k} - \boldsymbol{w}_{k - 1}
        \right) - \eta \nabla \mathcal L(\boldsymbol{w}_k) 
\end{equation}
dividing both terms by $\eta$ (which acts a time scale parameter) we obtain
\begin{equation}
    \frac{\boldsymbol{w}_{k + 1} - 2 \boldsymbol{w}_k  + \boldsymbol{w}_{k-1}}{(\sqrt \eta)^2} = -  \gamma^u \frac{\boldsymbol{w}_{k} - \boldsymbol{w}_{k - 1}}{\sqrt \eta } - \nabla \mathcal L(\boldsymbol{w}_k)
\end{equation}
which in the small $\sqrt \eta$ limit becomes
\begin{equation}
    \ddot{\boldsymbol{w}} + \gamma^u \dot{\boldsymbol{w}} +   \nabla \mathcal L(\boldsymbol{w}) = \boldsymbol{0}
\end{equation}
which provides a continuous dynamics of the momentum optimizer.
The differential equation, written in this form, is a challenging object to study due to the gradient of the loss. We then proceed to provide a change of coordinate system that is mathematically tractable.
Consider the eigenbasis $\{(\lambda_i, \boldsymbol{v}_i)\}_{i=1}^N$ of $\mathbf H$ such that
\begin{equation}
    \mathbf H = \sum_i \lambda_i \boldsymbol{v}_i \boldsymbol{v}_i^\top
\end{equation}
we can rewrite the dynamics equation as
\begin{equation}
   \boldsymbol{v}_i^\top \ddot {\boldsymbol{w}} +
   \gamma^u \boldsymbol{v}_i^\top \dot {\boldsymbol{w}} +\boldsymbol{v}_i^\top \nabla \mathcal L(\boldsymbol{w}) = 0 \ \ \ \forall i
\end{equation}
explicitating the structure of the gradient we obtain
\begin{equation}
\begin{split}
   \boldsymbol{v}_i^\top \ddot {\boldsymbol{w}} 
   + 
   \gamma^u \boldsymbol{v}_i^\top \dot {\boldsymbol{w}} +  \boldsymbol{v}_i^\top \left( \mathbf H (\boldsymbol{w} - \boldsymbol{w}^*) + \lambda \boldsymbol{w}\right)
   & = 0
   \\
      \boldsymbol{v}_i^\top \ddot {\boldsymbol{w}} 
      +
      \gamma^u \boldsymbol{v}_i^\top \dot {\boldsymbol{w}}
   +  \lambda_i \boldsymbol{v}_i^\top (\boldsymbol{w} - \boldsymbol{w}^*) +  \lambda \boldsymbol{v}_i^\top \boldsymbol{w}
   & = 0
   \\
   \boldsymbol{v}_i^\top \ddot {\boldsymbol{w}} +
    \gamma^u \boldsymbol{v}_i^\top \dot {\boldsymbol{w}}
   + (\lambda_i + \lambda) \boldsymbol{v}_i^\top \boldsymbol{w} -
    \lambda_i \boldsymbol{v}_i^\top \boldsymbol{w}^*
   & = 
0
\\
   \partial_t^2 \left \{ \boldsymbol{v}_i^\top  {\boldsymbol{w}} \right\} +
    \gamma^u \partial_t \left\{ \boldsymbol{v}_i^\top {\boldsymbol{w}} \right\}
   +  (\lambda_i + \lambda) \boldsymbol{v}_i^\top \boldsymbol{w} -
    \lambda_i \boldsymbol{v}_i^\top \boldsymbol{w}^*
   & = 
0
\end{split}
\end{equation}
where $\boldsymbol{w}^*$ is an arbitrary point such that
\begin{equation}
    \mathbf H(\boldsymbol{w}^\star - \boldsymbol{w}^*) = 0
\end{equation}
For clarity, let us denote as 
\begin{equation}
    \tilde w_i = \boldsymbol{v}_i^\top \boldsymbol{w},  \ \ \ \tilde w^*_i = \boldsymbol{v}_i^\top \boldsymbol{w}^*
\end{equation}
which leads to the system of decoupled ODEs
\begin{equation}
    \partial_t^2 \tilde w_i + \gamma^u \partial_t \tilde w_i + 
    (\lambda_i + \lambda) \tilde w_i -  \lambda_i \tilde w^*_i = 0 \ \ \ \forall i
\end{equation}
Since we start from interpolation regime, we focus only components associated to the null space of the hessian, that is

\begin{equation}
       \partial_t^2 \tilde w_i + \gamma^u \partial_t \tilde w_i + 
     \lambda \tilde w_i = 0 \ \ \ \forall i : \lambda_i = 0 
\end{equation}
and we search the optimal $\gamma^u$ to maximize the convergence speed. 
Recognizing the harmonic oscillator equation, the fastest (non oscillating solution) is the \textbf{critically damped solution} defined as

\begin{equation}
    \tilde w^{CD}_i(t) = 
    \tilde w^{CD}_i(0) \left(1 + {\gamma^u}{t} \right) e^{ - \frac{\gamma^u}{2}t} 
\end{equation}
which is obtained for
\begin{equation}
    \gamma^u = 2 \sqrt{\lambda}
\end{equation}
where we remember, $\lambda$ is the weight decay factor. This gives the decay rate
\begin{equation}
     \tilde w_i^{CD}(t) \sim  e^{ - \sqrt{ \lambda } t } 
\end{equation}
against the $\sim e^{- 2 \lambda t}$ of classic gradient descent.
Replacing in $\beta$ we obtain
\begin{align*}
    1 -\beta & = \gamma^u \sqrt \eta  \\
    1 - \beta & = 2 \sqrt{ \lambda \eta} \\
    \beta & = 1 - 2 \sqrt{ \lambda \eta }
\end{align*}
which is the optimal $\beta$ parameter.
Let us study the impact on other coordinates. Consider again the again for each eigendirection $i$, fixing this time $\gamma^u = 2 \sqrt{ \lambda}$
\begin{equation}
    \partial_t^2 \tilde w_i + 2 \sqrt{ \lambda} \partial_t \tilde w_i + 
    (\lambda_i + \lambda) \tilde w_i -  \lambda_i \tilde w^*_i = 0 \ \ \ \forall i \neq 0
\end{equation}
and assume $\tilde w_i(0) = \tilde w^*_i$ and $\partial_t \tilde w_i(0) = 0$. 
\\ 
\\
To solve the given second-order linear ordinary differential equation (ODE) for $\tilde w_i(t)$, we aknowledge $\tilde w^*_i$, $\lambda$, and $\lambda_i$ as constants. We assume that $\lambda_i \gg \lambda$, meaning that "informative" directions cause stronger curvature than the regularization.

The equation, in canonical form, is:
\begin{equation}
    \partial_t^2 \tilde w_i + 2 \sqrt{ \lambda} \partial_t \tilde w_i + (\lambda_i + \lambda) \tilde w_i =  \lambda_i \tilde w^*_i
\end{equation}

We first find the steady-state or particular solution, $\tilde w_{i, p}$, by setting all time derivatives to zero:
\begin{align*}
    (\lambda_i + \lambda) \tilde w_{i, p} &= \lambda_i \tilde w^*_i \\[6pt]
    \tilde w_{i, p} &= \frac{\lambda_i}{\lambda_i + \lambda} \tilde w^*_i
\end{align*}

Next, we solve the homogeneous equation:
\begin{equation}
    \partial_t^2 \tilde w_i + 2 \sqrt{ \lambda} \partial_t \tilde w_i + (\lambda_i + \lambda) \tilde w_i = 0
\end{equation}

The characteristic equation for this ODE is:
\begin{equation}
    r^2 + 2 \sqrt{ \lambda} r + (\lambda_i + \lambda) = 0
\end{equation}

Using the quadratic formula to find the roots $r$:
\begin{align*}
    r &= \frac{- 2 \sqrt{ \lambda} \pm \sqrt{4 \lambda - 4(\lambda_i + \lambda)}}{2} \\[6pt]
    &= \frac{-2\sqrt{\lambda} \pm \sqrt{4\lambda - 4\lambda_i - 4\lambda}}{2} \\[6pt]
    &= \frac{-2\sqrt{ \lambda} \pm \sqrt{-4\lambda_i}}{2} \\[6pt]
    &= -\sqrt{ \lambda} \pm i \sqrt{ \lambda_i}
\end{align*}

Since the roots are complex (assuming $\lambda > 0$ and $\lambda_i > 0$), the solution to the homogeneous equation takes the form of an exponentially decaying sine and cosine wave:
\begin{equation}
    \tilde w_{i, h}(t) = e^{-\sqrt{\lambda} t} \left( A \cos(\sqrt{\lambda_i} t) + B \sin(\sqrt{\lambda_i} t) \right)
\end{equation}

The general solution is the sum of the homogeneous and particular solutions:
\begin{equation}
    \tilde w_i(t) = e^{-\sqrt{\lambda} t} \left( A \cos(\sqrt{\lambda_i} t) + B \sin(\sqrt{\lambda_i} t) \right) + \frac{\lambda_i}{\lambda_i + \lambda} \tilde w^*_i
\end{equation}

We are given two initial conditions to solve for constants $A$ and $B$.

Substitute $t = 0$ into the general solution:
\begin{align*}
    \tilde w^*_i &= e^0 (A \cdot 1 + B \cdot 0) + \frac{\lambda_i}{\lambda_i + \lambda} \tilde w^*_i \\[6pt]
    A &= \tilde w^*_i - \frac{\lambda_i}{\lambda_i + \lambda} \tilde w^*_i \\[6pt]
    A &= \frac{\lambda}{\lambda_i + \lambda} \tilde w^*_i
\end{align*}

First, take the derivative of the general solution $\tilde w_i(t)$ with respect to $t$:
\begin{align*}
    \partial_t \tilde w_i(t) &= -\sqrt{\lambda} e^{-\sqrt{\lambda} t} \big( A \cos(\sqrt{\lambda_i} t) + B \sin(\sqrt{\lambda_i} t) \big) \\
    &\quad + e^{-\sqrt{\lambda} t} \big( -A \sqrt{\lambda_i} \sin(\sqrt{\lambda_i} t) + B \sqrt{\lambda_i} \cos(\sqrt{\lambda_i} t) \big)
\end{align*}

Substitute $t = 0$ and set it equal to $0$:
\begin{align*}
    0 &= -\sqrt{\lambda} (A) + (B \sqrt{\lambda_i}) \\[6pt]
    B \sqrt{\lambda_i} &= A \sqrt{\lambda} \\[6pt]
    B &= A \sqrt{\frac{\lambda}{\lambda_i}}
\end{align*}

Now, plug in our result for $A$:
\begin{equation}
    B = \left( \frac{\lambda}{\lambda_i + \lambda} \tilde w^*_i \right) \sqrt{\frac{\lambda}{\lambda_i}} = \frac{\lambda \sqrt{\lambda}}{\sqrt{\lambda_i}(\lambda_i + \lambda)} \tilde w^*_i
\end{equation}

Substitute the constants $A$ and $B$ back into the general solution:
\begin{equation}
    \tilde w_i(t) = \tilde w^*_i \left[ \frac{\lambda_i}{\lambda_i + \lambda} + e^{-\sqrt{\lambda} t} \left( \frac{\lambda}{\lambda_i + \lambda} \cos(\sqrt{\lambda_i} t) + \frac{\lambda \sqrt{\lambda}}{\sqrt{\lambda_i}(\lambda_i + \lambda)} \sin(\sqrt{\lambda_i} t) \right) \right]
\end{equation}

In the Regime $\lambda_i \gg \lambda$ we have
\begin{equation}
    \frac{\lambda}{\lambda_i + \lambda} \sim \frac{\lambda}{\lambda_i}
\end{equation}
and
\begin{equation}
    \frac{\lambda \sqrt{\lambda}}{\sqrt{\lambda_i}(\lambda_i + \lambda)}
    \sim \left( \frac{\lambda}{\lambda_i} \right)^{3/2}
\end{equation}
and since $\lambda_i \gg \lambda$ we get
\begin{equation}
    \left( \frac{\lambda}{\lambda_i} \right)^{3/2} \ll \frac{\lambda}{\lambda_i}.
\end{equation}
We notice that the dynamics is characterized by small $\mathcal O (\lambda{\lambda_i}^{-1})$ fluctuations --- fast oscillating since their frequency is $\mathcal O( \lambda_i^{1/2})$ ---  that decay at the same rate as the null space components, but do not affect the true directions due to their small magnitude.
Therefore, the convergence rate of the momentum algorithm in this setting is
\begin{equation}
    \mathcal L_{reg}(t) \sim e^{ - \sqrt{ \lambda } t}
\end{equation}
which gives us the time scale
\begin{equation}
    \tau_{CDM} \sim \frac{1}{\sqrt{\lambda}} = \sqrt{\tau_{GD}}
\end{equation}
which employing the equivalence $\tau \sim E \Delta t$ (where $\Delta t$ is the discretization step in the continuous limit) leads to the relationship
\begin{equation}
    E_{CDM} \sim \sqrt{E_{GD}}
\end{equation}
proving our thesis.

\section{Experimental details}
\subsection{Computational costs}

All experiments were conducted on a
  workstation with dual AMD EPYC 7713
  CPUs (2 × 64 cores; 128 physical
  cores total) and an NVIDIA RTX 6000
  Ada Generation GPU (49,140 MiB VRAM,
  ~49 GB). Wall-clock times for the synthetic benchmarks and the real-world tabular datasets are provided in Table \ref{tab:appendix_benchmark_results_transposed}.

\begin{table}[!htbp]
\centering
\caption{\textbf{Wall clock time (s) across datasets.}
Results are reported as mean $\pm$ uncertainty across runs. Values are rounded to a precision appropriate for the reported uncertainty. Best values for each dataset are shown in bold.}
\label{tab:appendix_benchmark_results_transposed}
\scriptsize
\setlength{\tabcolsep}{4.5pt}
\renewcommand{\arraystretch}{1.15}
\begin{tabular}{lccccc}
\toprule
\textbf{Dataset}
& \textbf{Adam}
& \textbf{AdamW}
& \textbf{\texttt{GROKtimizer}}
& \textbf{Muon}
& \textbf{SGD} \\
\midrule
Binary Addition
& $456.5 \pm 29.6$
& $461.1 \pm 28.2$
& $\mathbf{453.7 \pm 32.1}$
& $455.9 \pm 30.0$
& $460.3 \pm 28.5$ \\

Gaussian
& $143.9 \pm 15.5$
& $143.6 \pm 15.6$
& $210.8 \pm 23.5$
& $143.0 \pm 15.7$
& $\mathbf{142.7 \pm 15.6}$ \\

Leukemia
& $0.473 \pm 0.009$
& $0.472 \pm 0.010$
& $0.405 \pm 0.004$
& $0.816 \pm 0.024$
& $\mathbf{0.377 \pm 0.007}$ \\

Modular Addition
& $1969 \pm 131$
& $1963 \pm 139$
& $\mathbf{1954 \pm 144}$
& $1956 \pm 139$
& $1972 \pm 137$ \\

QM9
& $3557 \pm 197$
& $3558 \pm 201$
& $\mathbf{3536 \pm 215}$
& $3539 \pm 212$
& $3539 \pm 208$ \\

RF Teacher Linear
& $5.63 \pm 0.24$
& $5.56 \pm 0.18$
& $5.28 \pm 0.28$
& $5.69 \pm 0.26$
& $\mathbf{4.74 \pm 0.18}$ \\

Sparse Parity
& $\mathbf{291.8 \pm 25.5}$
& $308.0 \pm 24.1$
& $297.7 \pm 24.7$
& $321.7 \pm 25.6$
& $305.5 \pm 24.5$ \\

TCGA
& $1.94 \pm 0.12$
& $1.90 \pm 0.10$
& $1.82 \pm 0.10$
& $2.45 \pm 0.09$
& $\mathbf{1.76 \pm 0.08}$ \\

Two-Subspace Linear
& $4.449 \pm 0.008$
& $4.470 \pm 0.016$
& $3.950 \pm 0.008$
& $4.398 \pm 0.007$
& $\mathbf{3.658 \pm 0.015}$ \\
\bottomrule
\end{tabular}
\end{table}

  \subsection{Dataset details}

  \textbf{Modular addition.}
  A synthetic algorithmic classification task
  where inputs are integer pairs $(a,b)$ and
  the label is $(a+b)\bmod 31$. In this repo,
  inputs are one-hot encoded (concatenated
  pair encoding), ordered pairs are included,
  and the default split is 50/50 train/test
  over the pair pool.
  Model: 3-layer MLP with hidden sizes
  $[1024,1024,1024]$.
  Hyperparameters: $\text{lr}=10^{-3}$, $
  \text{wd}=10^{-5}$, epochs $=10{,}000$.

  \textbf{Sparse parity.}
  A synthetic binary classification task built
  from random binary vectors of dimension 20;
  the target is parity over a selected support
  (here parity over 20 bits). Features are
  represented as $\{-1,+1\}$ values in the
  generator. Default sizes are 10,000 train
  and 1,000 test examples.
  Model: 3-layer MLP with hidden sizes
  $[2048,2048,2048]$, binary output.
  Hyperparameters: $\text{lr}=10^{-3}$, $
  \text{wd}=10^{-3}$, epochs $=1{,}000$.

  \textbf{Binary addition.}
  A synthetic supervised task where two 10-bit
  integers are concatenated as input and the
  11-bit binary sum is predicted as output.
  The generator creates random examples with
  1,500 training and 2,000 test samples.
  Model: 3-layer MLP with hidden sizes
  $[2048,2048,2048]$, output size 11.
  Hyperparameters: $\text{lr}=10^{-4}$, $
  \text{wd}=10^{-3}$, epochs $=2{,}000$.

  \textbf{TCGA.}
  A real gene-expression binary classification from KIRC \cite{cancer2013comprehensive}.
  Labels are derived from sample identifiers;
  split is stratified with a very small train
  fraction (test size 0.95).
  Model: linear binary classifier.
  Hyperparameters: $\text{lr}=10^{-3}$, $
  \text{wd}=10^{-3}$, epochs $=1{,}000$.

  \textbf{Leukemia.}
  A real biomedical binary classification
  dataset \cite{golub1999molecular} fetched from OpenML
  (\texttt{leukemia}, v1). Data are stratified
  into train/test (test size 0.8), and
  standardized using \texttt{StandardScaler}.
  Model: linear binary classifier.
  Hyperparameters: $\text{lr}=10^{-3}$, $
  \text{wd}=10^{-3}$, epochs $=1{,}000$.

  \textbf{QM9.}
  A molecular property regression task using
  PyTorch Geometric’s QM9 loader. In this
  config, a subset of size 100 is used, split
  50/50, and target index 1 is predicted (with
  optional target standardization).
  Model: GNN-style
  architecture (edge features + MLP head;
  hidden sizes 32/16 and FC hidden 2048).
  Hyperparameters: $\text{lr}=10^{-3}$, $
  \text{wd}=10^{-2}$, epochs $=5{,}000$.

  \textbf{MNIST.}
  Digit classification on MNIST with
  controllable low-sample training subsets,
  with full test set retained. Inputs are
  normalized and flattened vectors.
  Model: 3-layer MLP with hidden sizes
  $[1024,512,256]$, 10-way output.
  Hyperparameters: $\text{lr}=5\times10^{-3}$,
  $\text{wd}=10^{-3}$, epochs $=5{,}000$.

  \textbf{TwoSubspaceLinear.}
  Synthetic binary classification where labels
  depend only on a low-dimensional signal
  subspace, while extra junk dimensions are
  appended. Default config uses 384 train /
  1024 test, 16 signal dims, 224 junk dims.
  Model: linear classifier (logistic
  regression style).
  Hyperparameters: $\text{lr}=10^{-3}$, $
  \text{wd}=10^{-3}$, epochs $=10{,}000$.

  \textbf{RandomFeaturesTeacherLinear.}
  Synthetic teacher-student binary task: a
  random-feature teacher generates logits,
  then labels are thresholded and optionally
  noised. Default config uses 512 train / 1024
  test and 512 input features.
  Model: linear classifier.
  Hyperparameters: $\text{lr}=10^{-3}$, $
  \text{wd}=10^{-3}$, epochs $=10{,}000$.

  \textbf{Gaussian.}
  Synthetic linear regression where train
  labels are generated by a random linear map
  and test labels follow the interpolating
  solution. Default sizes are 100 train / 1000
  test with 200 features.
  Model: dot-product linear model (no bias).
  Hyperparameters: $\text{lr}=10^{-2}$, $
  \text{wd}=10^{-3}$, epochs $=3{,}000$.

  \textbf{WikiText-2.}
  Character-level language modeling from
  WikiText-2 raw text splits.
  Model: GPT-style causal LM with
  $n_{\text{layer}}=8$, $n_{\text{embd}}=384$,
  $n_{\text{head}}=6$, block size 256, batch
  size 192.
  Hyperparameters: $\text{lr}=3\times10^{-3}$,
  $\text{wd}=10^{-3}$, num\_steps $=8{,}000$.

  \textbf{BabyLM Strict Small.}
  Character-level language modeling over
  concatenated BabyLM strict-small sources,
  split by \texttt{train\_fraction=0.9}.
  Model: GPT-style causal LM with
  $n_{\text{layer}}=12$, $n_{\text{embd}}
  =512$, $n_{\text{head}}=8$, block size 256,
  batch size 128.
  Hyperparameters: $\text{lr}=2\times10^{-3}$,
  $\text{wd}=10^{-3}$, num\_steps $=12{,}000$.

\end{document}